\documentclass{article}

\usepackage{times}
\usepackage{graphicx} 
\usepackage{subfigure} 

\usepackage{natbib}

\usepackage{algorithm}
\usepackage{algorithmic}

\usepackage{amssymb}
\usepackage{latexsym}
\usepackage{amsbsy}
\usepackage{amsmath}

\usepackage{enumerate}

\usepackage{upgreek}
\usepackage{booktabs}
\usepackage{xspace}
\usepackage{mathtools}
\usepackage{gensymb}
\usepackage{amsthm}
\usepackage{commands_dalc}

\usepackage{hyperref}



\usepackage[accepted]{icml2016}

\icmltitlerunning{A New PAC-Bayesian Perspective on Domain Adaptation}

\begin{document} 
	
	\twocolumn[
	\icmltitle{A New PAC-Bayesian Perspective on Domain Adaptation}
	
	\icmlauthor{Pascal Germain}{pascal.germain@inria.fr}
	\icmladdress{INRIA, SIERRA Project-Team, 75589, Paris, France, and D.I.,  Ecole Normale Superieure, 75230 Paris, France}
	\icmlauthor{Amaury Habrard}{amaury.habrard@univ-st-etienne.fr}
	\icmladdress{Univ Lyon, UJM-Saint-Etienne, CNRS, IOGS,
          Laboratoire Hubert Curien UMR 5516, F-42023, Saint-Etienne, France}
	\icmlauthor{Fran\c cois Laviolette}{francois.laviolette@ift.ulaval.ca}
	\icmladdress{D{\'e}partement d'informatique et de g{\'e}nie logiciel, Universit{\'e} Laval, Qu{\'e}bec, Canada}
	\icmlauthor{Emilie Morvant}{emilie.morvant@univ-st-etienne.fr}
	\icmladdress{Univ Lyon, UJM-Saint-Etienne, CNRS, IOGS, 
          Laboratoire Hubert Curien UMR 5516, F-42023, Saint-Etienne, France}
	
	\icmlkeywords{transfer learning, domain adaptation, PAC-Bayesian theory, linear classifiers}
	
	\vskip 0.3in
	]
	
	\begin{abstract}
	  We study the issue of PAC-Bayesian domain adaptation: 
	  We want to learn, from a source domain, a majority vote model dedicated to a target one. 
Our theoretical contribution brings a new perspective 
by deriving an upper-bound on the target risk where the distributions' divergence---expressed as a ratio---controls the trade-off between a source error measure and the target voters' disagreement. 
Our bound suggests that one has to focus on regions where the source data is informative.
From this result, we derive a PAC-Bayesian generalization bound, and specialize it to linear classifiers.
 Then, we infer a learning algorithm
and perform experiments on real data.
	\end{abstract}

	

	\hyphenation{mi-ni-mi-za-tion}
	\hyphenation{mi-ni-mi-zing}
	\hyphenation{me-tho-do-lo-gy}
	\hyphenation{A-da-Boost}
	
	\newtheorem{theorem}{Theorem}
	
	\newtheorem{cor}[theorem]{Corollary}
	
	\newtheorem{definition}[theorem]{Definition}
	\newtheorem{lemma}[theorem]{Lemma}
	\newtheorem{propo}[theorem]{Proposition}
	\newtheorem{remark}[theorem]{Remark}
	
	
	\section{Introduction}
	\label{sec:intro}
	\label{sec:introduction}

	Machine learning practitioners are commonly  
	exposed to the issue of \emph{domain adaptation}\footnote{Domain adaptation is associated with \emph{transfer learning}~\cite{pan2010survey,quionero2009dataset}.}~\cite{jiang2008literature,margolis2011literature}:
	One usually learns a model from a corpus, {\it i.e.}, a fixed yet unknown source distribution, then wants to apply it on a new corpus, {\it i.e.}, a related but slightly different target distribution. 
	Therefore, domain adaptation is  widely studied in a lot of application fields like computer vision~\cite{patel2015visual,ganin-15}, bioinformatics~\cite{liu2008evigan}, natural language processing~\cite{blitzer2007domain,daume07easyadapt}, etc.
	A common
        example is the  spam filtering problem where a model needs to be adapted from one user mailbox to another receiving significantly different emails.
	Many approaches exist 
	to address domain adaptation, 
        often with the same idea: If we 
can
        apply a transformation 
        to ``move closer'' the distributions, then we can learn a model with the available labels.
	This 
        is generally performed 
    by reweighting the importance of labeled data~\cite{HuangSGBS-nips06,Sugiyama-NIPS07,CortesMM10,cortes-15}, 
	and/or by learning a 
        common representation for the source and target distributions \citep{chen-12,ganin-16},
	and/or by minimizing a measure of divergence between the distributions~\cite{morvant12,pbda,CortesM14}. 
	The divergence-based approach has especially been explored to derive generalization bounds for domain adaptation~\citep[\textit{e.g.,}][]{BDnips,BDjournal,Mohri,LiB07,zhang2012generalization}.
	Recently, this issue has been studied through the PAC-Bayesian framework~\cite{pbda},
	which focuses on learning 
	weighted majority votes\footnote{This setting is not too restrictive since many algorithms can be seen as a majority vote learning. {\it E.g.,} ensemble learning and kernel methods output models 
interpretable
          as majority votes.} without 
 target label. Even the latter result opened the door to tackle domain adaptation in a PAC-Bayesian fashion, it shares the same philosophy as the seminal works of \citet{BDnips,BDjournal,Mohri}: The risk of the target model is upper-bounded
jointly by 
the model's risk 
on the source distribution, the divergence between the marginal distributions,
	and a non-estimable term\footnote{More precisely, this term can only be estimated in the presence of labeled data from both the source and the target domains.} related to the ability to adapt in the current space.
	Note that \citet{LiB07} 
        proposed a PAC-Bayesian generalization bound for domain adaptation but they considered target labels. 

In this paper, we derive a novel domain adaptation bound 
          for the weighted majority vote framework. 
Concretely, the risk of the target model is still upper-bounded by three terms, but they differ in the information they capture. The first term is estimable from unlabeled data and relies on a notion of expected voters' disagreement on the target domain.
The second term depends on the expected accuracy of the voters on the source domain. Interestingly, this latter is weighted by a divergence between the source and the target domains that enables controlling the relationship between domains.
The third term estimates the ``volume'' of the target domain living apart from the source one\footnote{Here we do not focus on learning a new representation to help the adaptation: We directly  aim at adapting in the current 
  space.}, which has to be small for ensuring adaptation.
From our bound, we deduce that a good adaptation strategy consists 
in finding a weighted majority vote leading to a suitable trade-off---controlled by the domains' divergence---between the first two terms: 
Minimizing the first one corresponds to look for voters that disagree on the target domain, and  minimizing the second one to seek  accurate voters on the source.
Thereafter, we provide PAC-Bayesian generalization guarantees to justify the empirical minimization of our new domain adaptation bound,
and specialize it to linear classifiers (following a methodology known  to give rise to 
tight bound values).  This allows to design \algo, a learning algorithm  that improves the performances of the  previous PAC-Bayesian domain adaptation algorithm.
	
		The rest of the paper is organized as follows. Section~\ref{sec:notations} presents the PAC-Bayesian domain adaptation setting. Section~\ref{section:previous_bounds} reviews previous theoretical results on domain adaptation.
 Section~\ref{sec:new_bound} states our new analysis of domain adaptation for majority votes, that we relate to other works in Section~\ref{section:comp}. Then,  Section~\ref{section:pacbayes} provides generalization bounds, specialized to linear classifiers in Section~\ref{section:so_specialized} to motivate the \algo learning algorithm, evaluated in Section~\ref{sec:expe}.

	\section{Unsupervised Domain Adaptation Setting}
	\label{sec:notations}
	\label{sec:da}
	
	We tackle \emph{domain adaptation} for \emph{binary classification}, from a \mbox{$d$-dimensional} input space \mbox{$\X\!\subseteq\!\R^d$} to an output space \mbox{$\Y\!=\!\{-1,1\}$}.
        Our goal is to perform 
domain adaptation from a distribution~$\PS$---the \emph{source domain}---to another (related) distribution $\PT$---the \emph{target domain}---on $\XY$; $\DS$ and $\DT$ being the associated marginal distributions on $\X$.
	Given a distribution $\P$, we denote  $(\P)^m$  the distribution of a $m$-sample constituted by $m$ elements drawn \iid\ from  $\P$.
	We consider  the \emph{unsupervised domain adaptation} setting in which the 
	algorithm is provided with a \emph{labeled source $\ms$-sample} \mbox{$S\! =\! \{(\xbf_i,y_i)\}_{i=1}^{\ms}\!\sim\!(\PS)^\ms$},
	and with an \emph{unlabeled target $\mt$-sample} \mbox{$T\! =\! \{\tbfi\}_{i=1}^{\mt}\!\sim\!(\DT)^\mt$}. \\
{\bf PAC-Bayesian domain adaptation.}
Our work 
        is inspired by the PAC-Bayesian theory \citep[first introduced by][]{Mcallester99a}.
	More precisely, we adopt the PAC-Bayesian domain adaptation setting previously studied in \citet{pbda}.
	Given $\Hcal$, a set of voters \mbox{$h : \X \to \Y$}, the 
	\mbox{elements}
        of this 
	approach are a \emph{prior} distribution $\prior$ on $\Hcal$, a pair of source-target learning samples $(S,T)$ and a \emph{posterior} distribution $\posterior$ on $\Hcal$. The prior distribution $\prior$ models an {\it a priori} belief---before observing $(S,T)$---of the voters' accuracy. Then, given the information provided by  $(S,T)$, 
        we aim at learning a posterior distribution $\posterior$ leading to a \emph{\mbox{$\posterior$-weighted} majority vote}  over~$\Hcal$,\\[1mm]
\centerline{$\displaystyle
\BQ(\cdot) \ = \ \sign\left[\esp{h\sim\posterior} h(\cdot)\right],
$}\\[1mm]
	with nice generalization guarantees on the target domain $\PT$.
	In other words, we want to find  the posterior distribution
        $\posterior$ minimizing the true target risk of $\BQ$\,:\\[2mm]
\centerline{$\displaystyle	\RPT(\BQ) \ = \esp{\exbft} \I{\BQ(\tbf)\ne y},$} 
	where $\I{a}=1$ if $a$ is true, and $0$ otherwise.
	However, in most PAC-Bayesian analyses
	one does not directly focus on this majority vote risk,  but studies the expectation of the risks over~$\Hcal$ according to~$\posterior$, designed as the \emph{Gibbs risk}\,:
	\begin{equation} \label{eq:gibbs_risk}
	\Risk_\P(\GQ) \ =\,  \Esp_{\exbf\sim\P}\, \Esp_{h\sim\posterior}\, \I{h(\xbf)\ne y}\,.
	\end{equation}
	It is well-known in the PAC-Bayesian literature that
	$\Risk_\P(\BQ)   \leq  2 \,\Risk_\P(\GQ)$ \citep[\eg,][]{herbrich-00}.
	Unfortunately, this worst case bound often leads to poor generalization guarantees on the majority vote risk. To address this issue, 
	\citet{graal-nips06-mv} \citep[refined in][]{graal-neverending}
	have exhibited that one can obtain a tighter bound on $\Risk_\P(\BQ)$ by studying the \emph{expected disagreement} $\dD(\posterior)$
of pairs of voters, defined as
	\begin{align}
	\dD(\posterior) &=\hspace{-1mm} 
	\Esp_{\xbf\sim \D } 
	\Esp_{h\sim \posterior} \Esp_{h'\sim \posterior}   \I{h(\xbf)\ne h'(\xbf)}, \label{eq:voters_dis}
	\end{align}
	as $\Risk_\P(\BQ)  \leq 1{-}\frac{(1-2\,\Risk_\P(\GQ))^2}{1-2\,\dD(\posterior)}$.
		Note that, although relying on $\dD(\posterior)$, 
		 our present work does not reuse the latter result.\footnote{The quantity $\dD(\posterior)$ is also used in the domain adaptation bound of~\citet{pbda} to measure divergence between distributions. See forthcoming Theorem~\ref{theo:pbda}.}
Instead, we adopt another well-known strategy to obtain tight majority vote bounds, by specializing our PAC-Bayesian bound to linear classifiers. We describe this approach, and refer to related works, in Section~\ref{section:so_specialized}.

\section{Some Previous Domain Adaptation Bounds}
\label{section:previous_bounds}
	
Many approaches tackling domain adaptation 
share the same underlying ``philosophy'', 
pulling its origins
 in the work of \citet{BDnips,BDjournal} which proposed a domain adaptation bound  (Theorem~\ref{theo:BenDavid}, below).
To summarize, the domain adaptation bounds reviewed in this section \citep[see][for other bounds]{zhang2012generalization,CortesMM10,cortes-15} express a similar trade-off between three terms: {\bf (i)} the source risk, {\bf (ii)} the distance between source and target marginal distributions over $\X$, {\bf (iii)} a non-estimable term (without target label) quantifying the difficulty of the 
  task.
\\
 \citet{BDnips} assumed that the domains are related in the sense that there exists a (unknown) model performing well on both domains. 
Formally, their domain adaptation bound depends on 
the error \mbox{$\mu_{h^*}\! = \!\RPS(h^{ *})\!+\!\RPT(h^{ *})$} of the best hypothesis overall
        \mbox{$h^{ *} \! = \!  \argmin_{h\in{\cal H}}  \big( \RPS (h) + \RPT (h) \big)$}. 
In practice, when no target label is available, $\mu_{h^*}$ is non-estimable and is assumed to be low when domain adaptation is achievable (or at least that there exists a representation space in which this assumption can be verified).
In such a scenario, the domain adaptation strategy is then to 
        look for a set $\Hcal$ of possible models that behave ``similarly'' on both the source and target data, and 
        to learn a model in $\Hcal$ with a good accuracy on the source data.
%
	This similarity, called
	the  \mbox{\emph{{\small $\hdh$}-distance}},\\[1mm] 
 $	d_{\hdh}(\DS,\DT) =$\\$\displaystyle 2\!\! \!\sup_{{(h,h')\in\Hcal^2}}  \Big| \Esp_{\xbf\sim\DS}\!\I{h(\xb) \neq h'(\xbf) }\!	-\!\!\!\Esp_{\xbf\sim\DT}\!\I{h(\xb) \neq h'(\xbf) }\Big|,$\\[1mm]
gives rise to the following domain adaptation bound.
	\begin{theorem}[\citealp{BDnips,BDjournal}]
		\label{theo:BenDavid}
		Let ${\cal H}$ be a (symmetric\footnote{In a symmetric 
                  $\Hcal$, for all $h\in\Hcal$, its inverse $-h$ is also in $\Hcal$.}) hypothesis class. We have, 
		\begin{equation}
		\label{eq:da}
		\forall h \!\in\! \Hcal,\ 
		\RPT(h)\, \leq\, \RPS(h)+\tfrac{1}{2}d_{\hdh}(\DS,\DT) + \mu_{h^*}.
\end{equation}
	\end{theorem}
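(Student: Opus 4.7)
The plan is to pivot through the jointly optimal hypothesis $h^*$ and apply the triangle inequality for the $0$-$1$ loss twice — once under $\PT$, once under $\PS$ — then transport the resulting pairwise disagreement from the target marginal $\DT$ to the source marginal $\DS$ via the definition of $d_{\hdh}(\DS,\DT)$. The elementary ingredient is the pointwise triangle inequality: for any $a,b,c\in\Y$,
\begin{equation*}
\I{a\ne c} \ \leq\ \I{a\ne b} + \I{b\ne c},
\end{equation*}
which follows by case analysis (if $a=b$ the two sides coincide; otherwise the right-hand side is already at least $1$).

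Taking this inequality with $a=h(\xbf)$, $b=h^*(\xbf)$, $c=y$, and expectation under $\PT$, yields
\begin{equation*}
\RPT(h) \ \leq\ \Esp_{\xbf\sim\DT}\I{h(\xbf)\ne h^*(\xbf)} + \RPT(h^*),
\end{equation*}
whose first summand depends only on the marginal $\DT$. I would then add and subtract the analogous source quantity $\Esp_{\xbf\sim\DS}\I{h(\xbf)\ne h^*(\xbf)}$: the resulting gap is exactly one of the expressions over which the supremum in the definition of $d_{\hdh}(\DS,\DT)$ is taken, so, since $(h,h^*)\in\Hcal^2$, it is bounded by $\tfrac{1}{2} d_{\hdh}(\DS,\DT)$ — the factor $\tfrac{1}{2}$ appearing in the theorem is inherited from the factor $2$ sitting in front of the supremum in the definition of the $\hdh$-distance. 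Reapplying the same pointwise triangle inequality with $a=h(\xbf)$, $b=y$, $c=h^*(\xbf)$, and taking expectation under $\PS$, produces $\Esp_{\xbf\sim\DS}\I{h(\xbf)\ne h^*(\xbf)} \leq \RPS(h) + \RPS(h^*)$. Chaining the three bounds and regrouping $\RPS(h^*)+\RPT(h^*) = \mu_{h^*}$ delivers the claimed inequality.

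I do not expect any serious obstacle: the whole argument is two triangle inequalities plus one appeal to the definition of $d_{\hdh}$. The only point deserving care is to verify that the specific pair $(h,h^*)$ used in the transport step belongs to $\Hcal^2$, so that the supremum in $d_{\hdh}$ genuinely dominates our disagreement gap; this is automatic, since $h\in\Hcal$ by hypothesis and $h^*\in\Hcal$ by its definition as an $\argmin$ over $\Hcal$. The symmetry of $\Hcal$ mentioned in the statement does not enter the computation I sketched — it is typically invoked only to rewrite $d_{\hdh}$ in alternative equivalent forms, which are not needed here.
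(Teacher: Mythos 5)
Your proof is correct, and it is exactly the canonical argument for this result: the paper itself states Theorem~\ref{theo:BenDavid} as a cited result without reproducing a proof, and the proof in the cited reference \citep{BDnips,BDjournal} proceeds precisely as you do --- pivot through $h^*$ with the pointwise triangle inequality under $\PT$, transport the disagreement $\Esp_{\xbf\sim\DT}\I{h(\xbf)\ne h^*(\xbf)}$ to $\DS$ at cost $\tfrac{1}{2}d_{\hdh}(\DS,\DT)$, apply the triangle inequality again under $\PS$, and regroup into $\mu_{h^*}$. Your closing remark is also accurate: with $d_{\hdh}$ defined directly as a supremum over pairs $(h,h')\in\Hcal^2$, symmetry of $\Hcal$ plays no role in this inequality and is only needed elsewhere (e.g., to relate the divergence to the error of a source-versus-target discriminator for empirical estimation).
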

	Pursuing in the same line of research, \citet{Mohri} generalizes the  {\small $\hdh$}-distance to real-valued loss functions \mbox{$\Lcal:[-1,1]^2\to\Rbb^+$}, to express a similar theorem for regression.
	Their \emph{discrepancy} $\disc_\Lcal(\DS, \DT)$
	is defined as
$\displaystyle	\sup_{{(h,h')\in\Hcal^2}}  \left|\Esp_{\xbf\sim\DS}\Lcal\big(h(\xb), h'(\xbf) \big)-\!\!
	\Esp_{\xbf\sim\DT}\Lcal\big(h(\xb), h'(\xbf) \big)
	\right|.$
	The accuracy of the  \citet{Mohri}'s 
        bound 
       also relies on a non-estimable term 
       assumed to be low when adaptation is achievable. Roughly, this term 
depends on
       the risk of the best target hypothesis 
and its \emph{agreement} with the best source hypothesis on the source domain.
%
	
	Building on 
        previous domain adaptation analyses, \citet{pbda} derived a PAC-Bayesian domain adaptation bound.
	This bound is based on a divergence 
	suitable for PAC-Bayes, \ie,
for the risk of a $\posterior$-weighted majority vote of the voters of $\Hcal$ (instead of a single classifier $h\!\in\!\Hcal$).
This \emph{domain disagreement} $\dis(\DS,\DT)$ is defined as 
	\begin{align}
	\label{eq:domain_disagreement}
	\dis(\DS,\DT)
	&=\  \big|\, \dDS(\posterior) - \dDT(\posterior)\, \big| \,.
	\end{align}
Theorem~\ref{theo:pbda} (below) needs the strong assumption that, in favorable adaptation situations, the learned posterior agrees with the best target one  
$\posteriort^{\! *} \!\!   = \!  \argmin_{\posterior}\! \RPT(\GQ)$. 
	Indeed, it relies on the following non-estimable term:  {\small $\lambda(\posterior) \!=\!\RPT(G_{\posteriort^{ *}}\!)\! +\!  \Esp_{h\sim\posterior}\Esp_{ h'\sim\posteriort^{ *}} \Esp_{\xbfs} \I{h(\xbf){\ne} h'(\xbf)}\! +\! \Esp_{h\sim\posterior}\Esp_{ h'\sim\posteriort^{ *}} \Esp_{\xbft} \I{h(\tbf){\neq} h'(\tbf)}$. }
	
	\begin{theorem}[\citealp{pbda}]
		\label{theo:pbda}
		Let $\Hcal$ be a set of voters. For any domains $\PS$ and $\PT$ over $\XY$, 
		we have, 
\centerline{$\displaystyle		\forall \posterior\mbox{ on }\Hcal,\ 
		\RPT(\GQ)\ \leq\ \RPS(\GQ) + \dis(\DS,\DT) + \lambda(\posterior).$}
	\end{theorem}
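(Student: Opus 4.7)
The plan is to use the standard PAC-Bayesian decomposition of the Gibbs risk into a disagreement part and a joint-error part. Writing
\[
e_\P(\posterior,\posterior')\ =\ \Esp_{h\sim\posterior}\Esp_{h'\sim\posterior'}\Esp_{\exbf\sim\P}\I{h(\xbf)\ne y,\ h'(\xbf)\ne y}
\]
for the expected joint error on $\P$, a case analysis on $\I{h\ne y}+\I{h'\ne y}=\I{h\ne h'}+2\,\I{h\ne y,\,h'\ne y}$ and averaging over $h,h'\sim\posterior$ and $\exbf\sim\P$ yields the identity $\Risk_\P(\GQ)=\tfrac{1}{2}\dD(\posterior)+e_\P(\posterior,\posterior)$. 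Applying this identity with $\P=\PT$ and $\P=\PS$ and subtracting,
\[
\RPT(\GQ)-\RPS(\GQ)\ =\ \tfrac{1}{2}\bigl(\dDT(\posterior)-\dDS(\posterior)\bigr)+\bigl(e_{\PT}(\posterior,\posterior)-e_{\PS}(\posterior,\posterior)\bigr);
\]
the first bracket is directly bounded by $\dis(\DS,\DT)$.

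For the joint-error gap, I would telescope through the target-optimal posterior $\posteriort^{*}$,
\[
e_{\PT}(\posterior,\posterior)-e_{\PS}(\posterior,\posterior)\ =\ \underbrace{\bigl[e_{\PT}(\posterior,\posterior)-e_{\PT}(\posterior,\posteriort^{*})\bigr]}_{(a)}+\underbrace{\bigl[e_{\PT}(\posterior,\posteriort^{*})-e_{\PS}(\posterior,\posteriort^{*})\bigr]}_{(b)}+\underbrace{\bigl[e_{\PS}(\posterior,\posteriort^{*})-e_{\PS}(\posterior,\posterior)\bigr]}_{(c)}.
\]
Term $(b)$ is at most $\RPT(G_{\posteriort^{*}})$, since $e_{\PT}(\posterior,\posteriort^{*})\leq\Esp_{h^*\sim\posteriort^{*}}\RPT(h^*)=\RPT(G_{\posteriort^{*}})$ (drop $\I{h\ne y}\leq 1$) while $e_{\PS}(\posterior,\posteriort^{*})\geq 0$. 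For $(a)$ and $(c)$, the pointwise inequality $\I{h\ne y}\bigl(\I{h'\ne y}-\I{h^*\ne y}\bigr)\leq\I{h'\ne h^*}$, integrated over the appropriate marginal, produces exactly the two $(\posterior,\posteriort^{*})$-disagreement terms appearing in $\lambda(\posterior)$ on $\DT$ and $\DS$.

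The main technical care lies in handling signs in the telescoping so that each residual is upper-bounded by a non-negative quantity regardless of the sign of $\dDT-\dDS$ or of the individual joint-error differences. Summing the three bounds yields $\RPT(\GQ)\leq\RPS(\GQ)+\dis(\DS,\DT)+\lambda(\posterior)$, as claimed.
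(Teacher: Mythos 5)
This theorem is recalled from \citet{pbda}; the present paper states it without proof, so there is no in-paper derivation to compare against, and I have checked your argument on its own merits: it is correct. The decomposition you invoke is exactly Equation~\eqref{eq:GibbsDE}; subtracting its target and source instances and bounding $\tfrac12\big(\dDT(\posterior)-\dDS(\posterior)\big)$ by $\tfrac12\,\dis(\DS,\DT)$ is sound. For the joint-error gap, your pointwise inequality $\I{h(\xbf)\ne y}\,\big(\I{h'(\xbf)\ne y}-\I{h^{*}(\xbf)\ne y}\big)\le \I{h'(\xbf)\ne h^{*}(\xbf)}$ is valid, since the left-hand side is at most $\max\big\{0,\,\I{h'(\xbf)\ne y}-\I{h^{*}(\xbf)\ne y}\big\}$, which equals $1$ only when $h'(\xbf)\ne y$ while $h^{*}(\xbf)=y$; averaging it over $h'\sim\posterior$, $h^{*}\sim\posteriort^{*}$ and the relevant marginal bounds your terms $(a)$ and $(c)$ by the two $(\posterior,\posteriort^{*})$ cross-disagreement terms of $\lambda(\posterior)$ regardless of their signs (which settles the sign issue you flag), while $(b)\le\RPT(G_{\posteriort^{*}})$ exactly as you argue. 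Two remarks: first, your route actually yields the slightly sharper constant $\tfrac12$ in front of $\dis(\DS,\DT)$, which implies the stated inequality since $\dis(\DS,\DT)\ge 0$; second, the very shape of $\lambda(\posterior)$---one target Gibbs risk plus precisely the two $(\posterior,\posteriort^{*})$ disagreements on $\DS$ and $\DT$---shows that the original derivation in \citet{pbda} generates these same residual terms, so your proof is essentially a faithful reconstruction of the cited source's argument rather than a genuinely different approach.
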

	A compelling aspect of this PAC-Bayesian analysis is the suggested trade-off, which is function of~$\posterior$. 
Indeed, 
given a fixed instance space $\X$ and a fixed class $\Hcal$, apart from using importance weighting methods, the only way to minimize the bound of Theorem~\ref{theo:BenDavid} is to find $h\!\in\!\Hcal$ that minimizes $\RPS(h)$.
In~\citet{pbda}, the bound of Theorem~\ref{theo:pbda} inspired an algorithm---named {\sc pbda}---selecting $\posterior$ over $\Hcal$ that achieves a trade-off between $\RPS(\GQ)$ and $\dis(\DS,\DT)$.
		However, the term $\lambda(\posterior)$ 
		does not appear in the optimization process of {\sc pbda}, even if it relies on the learned weight distribution $\posterior$.
		It is assumed that the value of $\lambda(\posterior)$ 
		should be negligible (uniformly for all $\posterior$) when adaptation 
is achievable. Nevertheless, this strong assumption cannot be verified because the best target posterior distribution $\posteriort^{ *}$ is unknown.
		This is a major weakness of the previous PAC-Bayesian work that our new approach overcomes.

	\section{A New Domain Adaptation Perspective}
\label{sec:new_bound}
	In this section, we introduce an original approach to upper-bound the non-estimable risk of a $\posterior$-weighted majority vote  on a target distribution~$\PT$ 
	thanks to a term depending on its marginal distribution~$\DT$, another one on a related source domain $\PS$, and a term capturing  the ``volume'' of the source distribution uninformative for the target task.
We base our bound on	
	the expected disagreement $\dD(\Q)$ of Equation~\eqref{eq:voters_dis} and the expected joint error $\eP(\Q)$, defined as
		\begin{align}
			\eP(\posterior) &=\hspace{-3mm}
			\Esp_{\exbf\sim\P} 
			\Esp_{h\sim \posterior} \Esp_{h'\sim \posterior}   \I{h(\xbf)\ne y}\, \I{h'(\xbf)\ne y}\,. \label{eq:erreur_jointe}
		\end{align} 
	Indeed,  \citet{graal-nips06-mv,graal-neverending} 
	observed that, given a domain~$\P$ on $\XY$ and a distribution $\posterior$ on $\Hcal$, we can decompose the Gibbs risk   as
		\begin{align} \label{eq:GibbsDE}
		\nonumber
		&\Risk_\P(\GQ) \!= \tfrac12 \Esp_{\exbf\sim\P} \Esp_{h\sim\posterior} \Esp_{h'\sim\posterior} \I{h(\xbf){\ne} y}\!{+}\I{h'(\xbf){\ne} y}\\
		\nonumber&=\!\!\!\!\! \Esp_{\exbf\sim\P}\Esp_{h\sim\posterior} \Esp_{h'\sim\posterior}   
		\!\!\frac{\I{h(\xbf){\ne} h'(\xbf)}\!{+}2\,\I{h(\xbf){\ne} y\!\wedge\! h'(\xbf){\ne} y}}{2}\\
		&=\ \tfrac12\,\dD(\Q) + \eP(\posterior) \,.
		\end{align}
%
	A key observation is that the \emph{voters' disagreement does not rely on labels}; we can compute $\dD(\Q)$ using the marginal distribution $\D$. Thus, in the present domain adaptation context, we have access to $\dDT(\Q)$ even if the target labels  are unknown.
	However, the expected joint error can only be computed on the labeled source domain. 

\textbf{Domains' divergence.}
In order to link the target joint error $\ePT(\Q)$ with the source one $\ePS(\Q)$, we weight the latter thanks to a divergence measure between the domains
 $\bq$ parametrized by a real value $q>0$\,: 
	\begin{equation} \label{eq:bq}
	\bq \ = \ \left[\,\esp{\exbfs}
	\left(  \frac{\PT(\xbf,y)}{\PS(\xbf,y)} \right)^q\, \right]^{\frac1q}.
	\end{equation}
	It is worth noting that  considering some $q$ values allow us to recover well-known divergences. For instance,  choosing
        \mbox{$q\! =\! 2$} relates
	our result  to the \mbox{$\chi^2$-distance}, 
         as
	\mbox{$\bq[2] \! = \! \sqrt{ \chi^2(\PT\|\PS)+1}\,.$ }
	Moreover, we can  link $\bq$ to the R{\'e}nyi divergence\footnote{For  $q\geq 0$, we can 
          show 
	\mbox{$\bq\! =\! 2^{\frac{q-1}{q} D_q(\PT\|\PS)}$},
		where $D_q(\PT\|\PS)$ is the R{\'e}nyi divergence between $\PT$ and $\PS$.},  which has 
        led to generalization bounds in the 
        context of importance weighting \citep{CortesMM10}. 
	We denote the limit case $q\!\to\!\infty$ by\\[1mm]
\centerline{$\displaystyle	\binf   \ =  
	\sup_{(\xbf,y)\in\scriptsupport(\PS)}
	\left(  \frac{\PT(\xbf,y)}{\PS(\xbf,y)} \right),$}\\[1mm]
	with $\support(\PS)$ the support of 
        $\PS$.
The divergence $\bq$ handles the input space  areas where the source domain support $\support(\PS)$ is included in the target one $\support(\PT)$. It seems reasonable to assume that, when adaptation is achievable, such areas are fairly large. However, it is 
likely that $\support(\PT)$ is \emph{not entirely} included in $\support(\PS)$.
We denote  $\TminusS$ the distribution of $(\xbf,y){\sim}\PT$ conditional to $(\xbf,y){\in}\support(\PT){\setminus}\support(\PS)$.
Since it is hardly conceivable to estimate the joint error $\e_\TminusS(\posterior)$ without making extra assumptions, we 
define the worst 
risk for this \emph{unknown} area 
	\begin{equation}
	\label{eq:etaTS}
	\ets = \!\!\!
\Pr_{(\xb,y)\sim \PT} \!\Big((\xb,y)\notin \support(\PS)\Big) \ 
	\sup_{h\in\Hcal} R_\TminusS(h)\,.
	\end{equation}
	Even if we cannot evaluate  $\sup_{\Hcal} R_\TminusS(h)$, the value of $\ets$ is necessarily lower than $\Pr_\PT((\xb,y){\notin} \support(\PS))$. 

\textbf{The domain adaptation bound.}
Let us state the result underlying the domain adaptation perspective of this paper.
	\begin{theorem}\label{theo:new_bound_general}
		Let $\Hcal$ be a hypothesis space, let $\PS$ and $\PT$ respectively be the source and the target domains on $\XY$. 
		Let $q>0$ be a constant. 
		We have, for all $\posterior$ on $\Hcal$,
		\begin{align*} 
		\RPT(\GQ) \, \leq \, \frac12 \,\dDT(\Q) +
		\bq{\times }
		\Big[ \ePS(\Q) \Big]^{1-\frac1q}
		\!+ \eta_{\PT\setminus\PS}\,,
		\end{align*}
		where
		$\dDT(\Q)$, $\ePS(\Q)$, $\bq$ and $\ets$ are respectively defined by Equations~\eqref{eq:voters_dis}, \eqref{eq:erreur_jointe}, \eqref{eq:bq} and~\eqref{eq:etaTS}.
	\end{theorem}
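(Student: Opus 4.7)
The plan is to start from the decomposition of Equation~\eqref{eq:GibbsDE} applied to the target domain, namely
\[
\RPT(\GQ) \ =\ \tfrac12\, \dDT(\Q) \,+\, \ePT(\Q),
\]
so that the entire task reduces to bounding the target joint error $\ePT(\Q)$ by $\bq\cdot [\ePS(\Q)]^{1-1/q} + \ets$. The first term $\tfrac12\dDT(\Q)$ in the statement then appears for free.

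The next step is a support-splitting argument. Writing
\[
\ePT(\Q) \ =\ \esp{\exbft}\,\mathbf{1}[(\xbf,y)\in\support(\PS)]\,f(\xbf,y) \,+\, \esp{\exbft}\,\mathbf{1}[(\xbf,y)\notin\support(\PS)]\,f(\xbf,y),
\]
where $f(\xbf,y) = \esp{h\sim\posterior}\esp{h'\sim\posterior}\I{h(\xbf)\ne y}\I{h'(\xbf)\ne y}$, the ``outside'' piece is handled by the crude bound $f(\xbf,y)\le \I{h(\xbf)\ne y}$ (taking expectation over $h$ only), giving $\e_\TminusS(\Q)\le R_\TminusS(G_\Q)\le \sup_{h\in\Hcal} R_\TminusS(h)$. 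Multiplying by $\Pr_\PT((\xbf,y)\notin\support(\PS))$ yields exactly $\ets$ as defined in Equation~\eqref{eq:etaTS}.

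For the ``inside'' piece, I change measure from $\PT$ to $\PS$ via the Radon--Nikodym factor $\PT(\xbf,y)/\PS(\xbf,y)$, which is well-defined on $\support(\PS)$. This rewrites the integral as $\esp{\exbfs}\,[\PT/\PS]\cdot f$, and the natural tool is H\"older's inequality with conjugate exponents $q$ and $q/(q-1)$:
\[
\esp{\exbfs}\!\Big[\tfrac{\PT}{\PS}\cdot f\Big] \,\le\, \Big(\esp{\exbfs}\!\big[\tfrac{\PT}{\PS}\big]^{q}\Big)^{\!1/q} \!\cdot\! \Big(\esp{\exbfs}\, f^{q/(q-1)}\Big)^{\!(q-1)/q}.
\]
The first factor is $\bq$ by Equation~\eqref{eq:bq}. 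For the second factor, I exploit the crucial fact that $f(\xbf,y)\in[0,1]$ (it is an expectation of an indicator), hence $f^{q/(q-1)}\le f$ whenever $q/(q-1)\ge 1$, i.e.~for $q>1$, yielding $[\ePS(\Q)]^{(q-1)/q} = [\ePS(\Q)]^{1-1/q}$.

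The mildly delicate step is getting the exponents in H\"older's inequality to align with the $1-1/q$ stated in the theorem; once one realizes that the indicator $f\in[0,1]$ lets one upgrade $f^{q/(q-1)}\le f$, everything falls into place. Combining the two region-bounds with the initial decomposition gives the claim. The limiting case $q\to\infty$ (corresponding to $\binf$) and the boundary $q=1$ can be recovered by continuity / a direct $L^\infty$--$L^1$ H\"older pairing, so no separate argument is strictly necessary for the finite-$q$ statement.
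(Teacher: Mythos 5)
Your proof is correct and takes essentially the same route as the paper's: the decomposition $\RPT(\GQ)=\tfrac12\dDT(\Q)+\ePT(\Q)$ of Equation~\eqref{eq:GibbsDE}, the split along $\support(\PS)$ with the outside mass bounded by $\ets$ (you use $\I{h(\xbf){\ne}y}\I{h'(\xbf){\ne}y}\le\I{h(\xbf){\ne}y}$ where the paper invokes $\e_\TminusS(\posterior)=R_\TminusS(G_\posterior)-\tfrac12{\rm d}_{\TminusS}(\posterior)$ and drops the disagreement term), then change of measure and H\"older with conjugate exponents $\big(q,\tfrac{q}{q-1}\big)$, the only cosmetic difference being that you remove the exponent via $f^{q/(q-1)}\le f$ for $f\in[0,1]$ while the paper keeps the expectations over $(h,h')$ inside H\"older so the exponent acts on a $\{0,1\}$-valued indicator and can be erased outright. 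Your remark that the argument needs $q>1$ is apt---the paper's proof has exactly the same implicit restriction (its $p$ with $\tfrac1p=1-\tfrac1q$ is a valid H\"older conjugate only for $q>1$) despite the statement reading $q>0$.
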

	\begin{proof}
Let us define $t\!=\!\Esp_{\exbft}  \I{(\xb,y) {\notin} \support(\PS)} $, then
\begin{align*}
\eta_\posterior&=  \!\!\!\!\!\!\!
\Esp_{\exbft} \!\!\! \I{(\xb,y) {\notin} \support(\PS)}\!  \Esp_{h\sim\posterior} \Esp_{h'\!\sim\posterior}   \!\!
\I{h(\tbf){\ne} y}\I{h'(\tbf){\ne} y} \\
 =& \ t \!\!\!\!\!\!
 \Esp_{{(\xbf,y)\sim\TminusS}}  \Esp_{h\sim\posterior} \Esp_{h'\sim\posterior}   
 \I{h(\tbf){\ne} y}\,\I{h'(\tbf){\ne} y}
 =  t\, \e_\TminusS(\posterior) \\
=& \ t \Big(R_\TminusS(G_\posterior) {-} \tfrac12{\rm d}_{\TminusS}(\posterior) \Big)\leq t \sup_{h\in\Hcal} R_\TminusS(h) =\eta_\TminusS\,.
\end{align*}
		Then, with $\beta_q\!=\!\bq$ and $p$ such that $\,\tfrac1p{=}1{-}\tfrac1q$, 
\allowdisplaybreaks[4]
		\begin{align} 
		&\nonumber\ePT(\posterior) \, =\!\!\!\Esp_{\exbft}  
		\Esp_{h\sim\posterior} \Esp_{h'\sim\posterior}  
		\I{h(\tbf){\ne }y}\,\I{h'(\tbf){\ne} y}
		\\
		\label{eq:encore_egal}
		&= \!\!\!\Esp_{\exbfs}  \! \tfrac{\PT(\xbf,y)}{\PS(\xbf,y)} \!\! \Esp_{h\sim\posterior} \Esp_{h'\sim\posterior}  \!\I{h(\xbf){\ne} y}\I{h'(\xbf){\ne} y}\!+\!\eta_\posterior\\
		\nonumber &\leq   
		\beta_q \!\left[ 
		\Esp_{h\sim\posterior} \Esp_{h'\sim\posterior}  
		 \Esp_{\exbfs} \!\left( \I{h(\xbf){\ne} y}\I{h'(\xbf){\ne} y} \right)^p
		\right]^{\!\frac1p}\!\!\!+\!\eta_\posterior\,.
		\end{align}
		Last line is due to H{\"o}lder inequality. Finally, we remove the exponent from expression $(\I{h(\xbf)\ne y}\I{h'(\xbf)\ne y})^p$ without affecting its value, which is either $1$ or~$0$, and the final result follows from Equation~\eqref{eq:GibbsDE}.
	\end{proof}
		Note that the bound of Theorem~\ref{theo:new_bound_general} is reached whenever the domains are equal ($\PS\! =\! \PT$).
		Thus, when adaptation is not necessary, our analysis is still sound and  non-degenerated: 
		\begin{align*}
		\RPS(\GQ) \ =\ \RPT(\GQ) \ &\leq\  \tfrac12\,
		\dDT(\Q) +  1\times\left[\ePS(\Q)\right]^1+0\\
		&=\  \tfrac12\, \dDS(\Q) +  \ePS(\Q) 
		=\ \RPS(\GQ) \,.
		\end{align*}	
\textbf{Meaningful quantities.}
	Similarly to the previous results recalled in Section~\ref{section:previous_bounds}, our domain adaptation theorem bounds the target risk by a sum of three terms. However, our approach breaks the problem into \emph{atypical} quantities: 
{\bf (i)} The expected disagreement $\dDT(\posterior)$ captures \emph{second degree} information about the target domain.
{\bf (ii)} The domains' divergence $\bq$
weights the influence of the expected joint error $\ePS(\posterior)$ of the source domain; the parameter~$q$ allows us to consider different relationships between $\bq$ and $\ePS(\Q)$. 
{\bf (iii)} The term $\ets$ quantifies the worst feasible target error on the regions 
where the source domain is uninformative for the target one.
In the current work, we assume that this area is small.

\section{Comparison With Related Works}
	\label{section:comp}
In this section, we discuss how our domain adaptation bound can be related to some previous works.
\subsection{On the previous PAC-Bayesian bound}
	It is instructive to compare the new bound of Theorem~\ref{theo:new_bound_general}
        with the previous PAC-Bayesian domain adaptation bound of Theorem~\ref{theo:pbda}. 
	In Theorem~\ref{theo:new_bound_general}, the non-estimable terms are the domain divergence $\bq$ and the  term $\ets$. Contrary to the non-controllable term
	$\lambda(\posterior)$ 
	of Theorem~\ref{theo:pbda}, 
	these terms do not depend on the \emph{learned} posterior distribution~$\posterior$: 
	 For every $\posterior$ on $\Hcal$, 
	$\bq$ and $\ets$
	are constant values measuring the relation between the domains. 
	Moreover, the fact that the domain divergence $\bq$ is not an additive term but a
	multiplicative one (as opposed to {\small$\dis(\DS,\DT)\! +\! \lambda(\posterior)$}
	in Theorem~\ref{theo:pbda}) is a contribution of our new analysis.
	 Consequently, $\bq$ 
	can be viewed as a hyperparameter allowing us to tune the
	trade-off between the target voters'
	disagreement and the source joint
	error. Experiments of Section~\ref{sec:expe} confirm that this hyperparameter can be successfully selected. 
	
\subsection{On some domain adaptation assumptions}
\label{section:assumptions}
In order to characterize which domain adaptation task may be \emph{learnable}, \citet{bendavid-12} presented three \emph{assumptions that can help domain adaptation}. 
Our Theorem~\ref{theo:new_bound_general}
does not rely on these assumptions, 
but they can be interpreted in our framework as discussed below. 
\\[0.5mm]
\textbf{On the covariate shift.} 
A domain adaptation task fulfills the \emph{covariate shift} assumption \cite{covariateshift} if the source and target domains only  differ in their marginals according to the input space, {\it i.e.,}
$\PT_{\Y|\xb}(y) = \PS_{\Y|\xb}(y)$. In this scenario,  
one may estimate 
$\bqx$, and even $\ets$, by using unsupervised density estimation methods.
	Interestingly, by also assuming 
        that the domains share the same support, we have \mbox{$\ets\!=\!0$}. Then from Line~\eqref{eq:encore_egal}  we obtain 

\mbox{\small$\displaystyle \RPT(\GQ) \!= \! \tfrac12 \dDT(\Q) \!   +  \! \! \!
	\Esp_{\xbfs}\!\tfrac{\DT(\xbf)}{\DS(\xbf)}  \!  \Esp_{h\sim\posterior}   \Esp_{h'\!\sim\posterior}\!\!
	\I{h(\xbf){\ne}y}\I{h'(\xbf){\ne}y}\!,$}\\[1mm]
	which suggests a way to correct the \emph{shift} between the domains
	by reweighting the labeled source distribution, while considering the information from the target disagreement.
\\[0.5mm]
\textbf{On the weight ratio.} 
 The \emph{weight ratio} \citep{bendavid-12} of source and target domains, with respect to a collection of input space subsets $\Bcal\subseteq 2^\X$, is given by\\[1mm]
\centerline{$\displaystyle C_\Bcal(\PS, \PT) \ = \ 
\inf_{\substack{b\in\Bcal,\, \DT(b)\neq 0 }} \,
\frac{\DS(b)}{\DT(b)}\,.$}\\[1mm]
When $C_\Bcal(\PS, \PT)$ is bounded away from $0$, adaptation should be achievable under covariate shift.
In this context, and when 
	\mbox{$\support(\PS)\!=\!\support(\PT)$}, 
	the limit case of  $\binf$ is equal to the inverse of the \emph{point-wise weight ratio} obtained by letting $\Bcal\!=\!\{\{\xbf\}:\xbf\in\X\}$ in $C_\Bcal(\PS, \PT)$.
Indeed, both $\beta_q$ and $C_\Bcal$ compare the densities of source and target domains, but provide distinct strategies to relax the point-wise weight ratio; the former by lowering the value of~$q$ and the latter by considering larger subspaces~$\Bcal$.
\\[0.5mm]
\textbf{On the cluster assumption.} A target domain fulfills the \emph{cluster assumption} when examples of the same label belong to a common ``area'' of the input space, 
and the differently labeled ``areas'' are well separated by \emph{low-density regions} \citep[formalized by the \emph{probabilistic Lipschitzness} of][]{urner-11}. 
Once specialized to linear classifiers, $\dDT(\posterior)$ behaves nicely in this context (see Section~\ref{section:so_specialized}).

\subsection{On representation learning}
\label{section:representation_learning}

The main assumption underlying our domain adaptation algorithm exhibited in Section~\ref{section:so_specialized} is that the support of the target domain is mostly included in the support of the source domain, \ie, the value of the term $\ets$ is small.
When $\TminusS$ is sufficiently large to prevent proper adaptation, one could try to reduce its volume  while taking care to preserve a good compromise between $\dDT(\Q)$ and $\ePS(\Q)$, using a \emph{representation learning} approach, 
\ie, by projecting source and target examples into a new common 
space, as done for example by \citet{chen-12,ganin-16}.

	\section{PAC-Bayesian Generalization Guarantees}
	\label{section:pacbayes}
	To compute our domain adaptation bound, one needs to know the distributions $\PS$ and $\DT$, which is never the case in real life tasks. The PAC-Bayesian theory provides tools to convert the bound of Theorem~\ref{theo:new_bound_general} into a generalization bound on the target risk computable from a pair of source-target samples $(S,T){\sim}(\PS)^\ms\!{\times} (\DT)^\mt$. 
	To achieve this, we first provide generalization guarantees for $\dDT(\Q)$ and $\ePS(\Q)$.
	These results are presented as corollaries of Theorem~\ref{theo:catoni_genral} below, that generalizes a PAC-Bayesian theorem of \citet{catoni2007pac} 
to  arbitrary loss functions.\footnote{To do so, we exploit a result of  \citet{maurer-04} that allows to generalize PAC-Bayes theorems to arbitrary bounded loss function (see the proof of Theorem~\ref{theo:catoni_genral} in supplemental).}
	Indeed, Theorem~\ref{theo:catoni_genral}, with $\ell(h,\xb,y)\! =\!
	\I{h(\xb){\neq} y}$ and Equation~\eqref{eq:gibbs_risk}, gives the usual bound
	on the Gibbs risk. 
	\begin{theorem}
		\label{theo:catoni_genral}
		For any domain $\P$ over $\X{\times}\Y$, any set of voters $\Hcal$, any prior  $\prior$ over~$\Hcal$, any loss $\ell:\Hcal{\times}\X{\times}\Y{\to}[0,1]$, any real number $c{>}0$, with a probability at least $1{-}\delta$ over the choice of $\{(\xb_i,y_i)\}_{i=1}^m {\sim} (\P)^m$, we have 
		for all $\posterior$ on $\Hcal$:
		\begin{align*}
		&\Esp_{\exbf\sim\P}  \Esp_{h\sim\posterior} \ell(h, \xb,y)
		\\[-2mm]
		 & \ \leq\
		\frac{c}{1{-}e^{-c}} \left[\frac{1}{m}\sum_{i=1}^m \Esp_{h\sim\posterior} \!\!\ell(h,\xb_i,y_i) + \frac{\KL(\posterior\|\prior) + \ln \frac{1}{\delta}}{ m\times c}\right].
		\end{align*}
	\end{theorem}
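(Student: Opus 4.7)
My plan is to combine three classical ingredients of a Catoni--style PAC--Bayes argument: (i) the convexity-based chord bound $e^{-cx}\le 1-(1{-}e^{-c})x$ for $x\in[0,1]$, which is precisely where the assumption $\ell\in[0,1]$ is used; (ii) Markov's inequality applied \emph{after} swapping $\Esp_{h\sim\prior}$ and $\Esp_S$ with Fubini; and (iii) the Donsker--Varadhan change-of-measure inequality, which upgrades a prior-wise exponential-moment bound into one uniform in the posterior $\posterior$.

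First, fix $h\in\Hcal$ and write $L(h)=\Esp_{(\xb,y)\sim\P}\ell(h,\xb,y)$ and $\widehat L_S(h)=\tfrac1m\sum_{i=1}^m\ell(h,\xb_i,y_i)$. Since $\ell(h,\xb,y)\in[0,1]$, the chord bound yields $\Esp_{(\xb,y)\sim\P}\, e^{-c\ell(h,\xb,y)}\le 1-(1{-}e^{-c})L(h)\le \exp\!\bigl(-(1{-}e^{-c})L(h)\bigr)$. Taking the product over the $m$ i.i.d.\ draws gives
\[
\Esp_{S\sim(\P)^m}\,\exp\!\Bigl(m(1{-}e^{-c})L(h)-cm\,\widehat L_S(h)\Bigr)\ \le\ 1.
\]
Integrating against the prior $\prior$ (which does not depend on $S$) and swapping expectations by Tonelli preserves this $\le 1$. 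Markov's inequality applied to the non-negative random variable $\Esp_{h\sim\prior}\exp(\cdots)$ then yields, with probability at least $1-\delta$ over $S\sim(\P)^m$,
\[
\Esp_{h\sim\prior}\exp\!\Bigl(m(1{-}e^{-c})L(h)-cm\,\widehat L_S(h)\Bigr)\ \le\ \tfrac1\delta .
\]

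Finally, apply the change-of-measure inequality $\Esp_\posterior \varphi(h)\le \KL(\posterior\|\prior)+\ln\Esp_\prior e^{\varphi(h)}$ with $\varphi(h)=m(1{-}e^{-c})L(h)-cm\,\widehat L_S(h)$. This gives, simultaneously for every posterior $\posterior$,
\[
m(1{-}e^{-c})\,\Esp_\posterior L(h)\ -\ cm\,\Esp_\posterior \widehat L_S(h)\ \le\ \KL(\posterior\|\prior)+\ln\tfrac{1}{\delta}.
\]
Dividing by $m(1{-}e^{-c})$ and factoring $c/(1{-}e^{-c})$ out of the right-hand side reproduces the theorem exactly.

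The only subtlety is bookkeeping around step~(ii): Markov must be applied to the \emph{prior-averaged} exponential moment (not to each $h$ separately, which would require a union bound over $\Hcal$), so that the change-of-measure step delivers uniformity over $\posterior$ for free. Everything else---the chord inequality, the use of $1-u\le e^{-u}$, and the final algebra---is routine, and the hypothesis $\ell\in[0,1]$ is used exactly once, to justify the chord bound at the outset.
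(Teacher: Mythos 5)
Your proof is correct, and it reaches the stated bound by a genuinely more direct route than the paper's. The paper's proof (Appendix~A) runs the general machinery: it starts from a generic convex function $\Delta$, applies Jensen's inequality together with the change-of-measure inequality, bounds the expectation $\esp{S'\sim\P^m} X_\prior(S')$ via Maurer's lemma---which reduces the exponential moment of a $[0,1]$-valued loss to the Bernoulli/binomial case---and only then plugs in the Catoni choice $\Delta(q,p)=-cq-\ln[1-p\,(1-e^{-c})]$, for which the binomial expectation equals $1$. This yields the tighter intermediate bound $\Esp_{h\sim\posterior}\mathcal{L}_{\P}(h)\leq\frac{1}{1-e^{-c}}\bigl[1-\exp\bigl(-c\,\Esp_{h\sim\posterior}\mathcal{L}_{S}(h)-\frac{\KL(\posterior\|\prior)+\ln(1/\delta)}{m}\bigr)\bigr]$, which is linearized via $1-e^{-z}\leq z$ only at the very last step. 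You instead linearize at the outset: the chord bound $e^{-cx}\leq 1-(1-e^{-c})x$ on $[0,1]$ plays exactly the role of Maurer's lemma for this particular $\Delta$ (it controls the per-sample moment generating function directly, with no detour through binomial tails), and because your exponent $\varphi(h)$ is \emph{linear} in the true and empirical losses, the Jensen step is unnecessary---linearity passes through $\Esp_{h\sim\posterior}$ exactly. The trade-off: your argument is shorter and self-contained (no appeal to Maurer's lemma or to the binomial identity of Germain et al.), but by committing to the linearization early it proves only the stated, relaxed inequality, whereas the paper's route establishes the strictly tighter $1-\exp(\cdot)$ form en passant. Your bookkeeping is also sound: applying Markov to the prior-averaged exponential moment \emph{before} Donsker--Varadhan, rather than change-of-measure first as in the paper, is an equivalent ordering, since the high-probability event concerns only $\Esp_{h\sim\prior}e^{\varphi(h)}\leq 1/\delta$ and hence uniformity over $\posterior$ comes for free on that event; and your final algebra ($\frac{c}{1-e^{-c}}\cdot\frac{1}{mc}=\frac{1}{m(1-e^{-c})}$) matches the theorem exactly.
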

Note that, similarly to \citet{mcallester-keshet-11}, we could choose to restrict $c\in(0,2)$ to obtain a slightly looser but simpler bound. Using $e^{-c}\leq1-c-\frac12c^2$, an upper bound on the 
right hand side
          of above equation is given by\\
	$\tfrac{1}{1{-}\frac12 c} \left[\tfrac{1}{m}\sum_{i=1}^m \Esp_{h\sim\posterior} \ell(h,\xb_i,y_i) + \frac{\KL(\posterior\|\prior) + \ln \frac{1}{\delta}}{ m\times c}\right].$


	We now exploit Theorem~\ref{theo:catoni_genral}  to obtain generalization guarantees on the expected disagreement and the expected joint error.
	PAC-Bayesian bounds on these quantities appeared in \citet{
          graal-neverending},
	 but under different forms. 
	In Corollary~\ref{theo:catoni_new} below, 
	we are especially interested in the possibility of controlling the trade-off---between the empirical estimate computed on the samples and the complexity term 
	 $\KL(\posterior\|\prior)$---with the help of parameters $b$ and $c$.
	
	\begin{cor}
		\label{theo:catoni_new}
		For any domains $\PS$ and $\PT$ over $\X{\times}\Y$, any set of voters $\Hcal$, any prior $\prior$ over $\Hcal$, any $\delta{\in}(0,1]$, any real numbers $b > 0$ and $c>0$, we have:\\
		--- with a probability at least $1{-}\delta$ over $T\sim(\DT)^{\mt}$, 
		\vspace{-1.5mm}
		\begin{align*}
		\forall \posterior \mbox{ on }\Hcal,\,
		\dDT(\Q)\, \leq\, \frac{c}{1{-}e^{-c}} \!\left[
	\dT(\Q) \!+\!   \frac{2\KL(\posterior\|\prior) \!+\! \ln \frac{1}{\delta}}{\mt\times c}\right]\!,
		\end{align*}
		--- with a probability at least $1{-}\delta$ over $S\sim(\PS)^{\ms}$,
				\vspace{-1.5mm}
		\begin{align*}
		\forall \posterior \mbox{ on }\Hcal, \,
		\ePS(\Q)\, \leq\, \frac{b}{1{-}e^{-b}} \!\left[
		\eS(\Q) \!+\!   \frac{2\KL(\posterior\|\prior) \!+\! \ln \frac{1}{\delta}}{\ms\times b}\right]\!,
		\end{align*}
where $\dT(\Q)$ and $\eS(\posterior) $ are the empirical estimations of the target voters' disagreement and the source joint error.
	\end{cor}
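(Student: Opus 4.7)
The plan is to obtain both bounds as direct instantiations of Theorem~\ref{theo:catoni_genral}, but applied to the \emph{product} hypothesis space $\Hcal \times \Hcal$ rather than $\Hcal$ itself. The reason is that neither $\dDT(\Q)$ nor $\ePS(\Q)$ is an expectation of a loss over a single voter drawn from $\posterior$; each involves a pair $(h,h')$ drawn i.i.d.\ from $\posterior$. So the natural move is to lift everything one level: take $\Hcal' \triangleq \Hcal^2$, prior $\prior' \triangleq \prior \otimes \prior$, posterior $\posterior' \triangleq \posterior \otimes \posterior$.

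For the disagreement bound, I would apply Theorem~\ref{theo:catoni_genral} to the target marginal $\DT$ (formally, to any $\PT$ whose marginal is $\DT$, since the loss will not depend on $y$), with the $[0,1]$-valued loss
\begin{equation*}
\ell\bigl((h,h'),\xbf,y\bigr) \ = \ \I{h(\xbf)\ne h'(\xbf)},
\end{equation*}
so that its $\posterior'$-expectation is exactly $\dDT(\Q)$ on the population side and $\dT(\Q)$ on the empirical side. For the source joint error bound, I would apply Theorem~\ref{theo:catoni_genral} to $\PS$ (with constant $b$ playing the role of $c$) with the $[0,1]$-valued loss
\begin{equation*}
\ell\bigl((h,h'),\xbf,y\bigr) \ = \ \I{h(\xbf)\ne y}\,\I{h'(\xbf)\ne y},
\end{equation*}
whose $\posterior'$-expectation is $\ePS(\Q)$ on the population side and $\eS(\Q)$ on the empirical side.

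The only remaining ingredient is to account for the KL term appearing in Theorem~\ref{theo:catoni_genral}, which now reads $\KL(\posterior'\|\prior') = \KL(\posterior\otimes\posterior \,\|\, \prior\otimes\prior)$. This is the standard product identity $\KL(\posterior\otimes\posterior\,\|\,\prior\otimes\prior) = 2\,\KL(\posterior\|\prior)$, and it precisely matches the factor of~$2$ in front of $\KL(\posterior\|\prior)$ in the statement. Putting these pieces together yields each inequality, uniformly over $\posterior$, with probability at least $1-\delta$ over the corresponding sample.

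The only subtlety to watch for is verifying that both losses indeed take values in $[0,1]$ (immediate, since each is an indicator or a product of two indicators) and that Theorem~\ref{theo:catoni_genral} applies on the lifted space without modification (it does, as its statement is agnostic to the hypothesis space structure). There is no real obstacle here; the content of the corollary is essentially the ``pair trick'' combined with the tensorization of KL, applied to the general-loss Catoni-style bound already established.
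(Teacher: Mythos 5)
Your proposal is correct and matches the paper's own proof essentially verbatim: the paper likewise lifts to the product space $\Hcal^2$ with product prior $\prior^2$ and posterior $\posterior^2$, defines the paired losses $\ell_d(h_{ij},\xb,y)=\I{h_i(\xbf)\ne h_j(\xbf)}$ and $\ell_e(h_{ij},\xb,y)=\I{h_i(\xbf)\ne y}\,\I{h_j(\xbf)\ne y}$, applies Theorem~\ref{theo:catoni_genral} to each, and uses $\KL(\posterior^2\|\prior^2)=2\,\KL(\posterior\|\prior)$ to obtain the factor of~$2$. Your added remarks (that $\ell_d$ is label-independent so only the marginal $\DT$ is needed, and that the theorem is agnostic to the hypothesis-space structure) are sound clarifications of points the paper leaves implicit.
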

	\begin{proof}
		Given $\prior$ and $\posterior$ over $\Hcal$, we consider a new prior $\prior^2$ and a new posterior $\posterior^2$, both over $\Hcal^2$, such that: \mbox{$\forall\, h_{ij}  =  (h_i,h_j) \in  \Hcal^2,\ \prior^2(h_{ij})  =  \prior(h_i)\prior(h_j)$,} 
		and \mbox{$\posterior^2(h_{ij})  =  \posterior(h_i)\posterior(h_j)$}.
		Thus, $\KL(\posterior^2\|\prior^2)  =  2 \KL(\posterior^2\|\prior^2)$ \citep[see][]{graal-neverending}.  
		Let us define two new loss functions for a ``paired voter'' $h_{ij}  \in  \Hcal^2$:
		\begin{align*}
		\ell_d(h_{ij}, \xb, y)\ &= \ \I{h_i(\xbf)\ne h_j(\xbf)}\,,\\
	\mbox{and}\ \	\ell_e(h_{ij}, \xb, y)\ &= \ \I{h_i(\xbf)\ne y}{\times} \I{h_j(\xbf)\ne y}\,.
		\end{align*}
		Then, the bound on $\dDT(\Q)$ is obtained from Theorem~\ref{theo:catoni_genral} with $\ell\eqdots \ell_d$, and Equation~\eqref{eq:voters_dis}. The bound on $\ePS(\Q)$ 
		is similarly obtained with $\ell\eqdots \ell_e$ and using Equation~\eqref{eq:erreur_jointe}.
	\end{proof}
	
For algorithmic simplicity, we deal with
	Theorem~\ref{theo:new_bound_general} when
	$q{\to}\infty$. 
	Thanks
	to Corollary~\ref{theo:catoni_new}, we obtain the following generalization bound defined 
	with respect to
	the empirical estimates of the target disagreement and the
	source joint error.
	\begin{theorem}
		\label{theo:catoni_new_general}
		For any domains $\PS$ and $\PT$ over $\X{\times}\Y$, any set of voters $\Hcal$, any prior $\prior$ over $\Hcal$, any $\delta{\in}(0,1]$, any
$b {>} 0$ and $c{>}0$, with a probability at least $1{-}\delta$ over the choices of $S{\sim}(\PS)^\ms$ and $ T {\sim} (\DT)^\mt$, we have
		\begin{align*}
				\forall \posterior \mbox{ on }\Hcal, \,
		\RPT(\GQ) & \leq\ c'\,\tfrac12\,\dT(\Q)  +b'\,\eS(\posterior) + \ets \\
		& {}+\!\left(\tfrac{c'}{\mt\times c} {+} \tfrac{b'}{\ms\times b} \right)  \!\! \Big(2\,\KL(\posterior\|\prior) + \ln \tfrac{2}{\delta}\Big) ,
		\end{align*}
where $\dT(\Q)$ and $\eS(\posterior) $ are the empirical estimations of the target voters' disagreement and the source joint error, and 
		$b'=\frac{b}{1-e^{-b}}\binf$, and 
		$c'=\frac{c}{1-e^{-c}}$.
	\end{theorem}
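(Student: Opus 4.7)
The plan is to combine the deterministic target-risk bound of Theorem~\ref{theo:new_bound_general}, specialized to $q\to\infty$, with the two empirical PAC-Bayesian inequalities of Corollary~\ref{theo:catoni_new}, joined through a union bound on the independent draws of $S$ and $T$.

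First, I would take $q\to\infty$ in Theorem~\ref{theo:new_bound_general}. Since $\bq\to\binf$ by definition~\eqref{eq:binf}, and $[\ePS(\Q)]^{1-1/q}\to\ePS(\Q)$ for $\ePS(\Q)\in[0,1]$, I obtain the deterministic inequality
\[
\RPT(\GQ)\,\leq\,\tfrac{1}{2}\,\dDT(\Q)\,+\,\binf\,\ePS(\Q)\,+\,\ets,
\]
which holds for every $\posterior$ on $\Hcal$ without any probabilistic statement. Second, I would apply Corollary~\ref{theo:catoni_new} twice, each time with failure probability $\delta/2$: once to upper-bound $\dDT(\Q)$ by its empirical counterpart $\dT(\Q)$ using the target sample $T\sim(\DT)^{\mt}$, and once to upper-bound $\ePS(\Q)$ by $\eS(\Q)$ using the source sample $S\sim(\PS)^{\ms}$. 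Because $S$ and $T$ are drawn independently, a union bound ensures that both empirical inequalities hold simultaneously with probability at least $1-\delta$ over the joint draw $(S,T)$, and with the parameter $\delta$ inside each bound replaced by $\delta/2$, which produces the $\ln(2/\delta)$ term in the statement.

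Finally, I would plug these two empirical bounds into the deterministic inequality, scale the source-side contribution by $\binf$, and introduce the shorthands $c'=c/(1-e^{-c})$ and $b'=\binf\,b/(1-e^{-b})$. Grouping the complexity contributions $2\KL(\posterior\|\prior)+\ln(2/\delta)$ from both applications into a single bracket then yields the target inequality. A direct computation actually produces the slightly tighter coefficient $\tfrac{c'}{2\mt c}$ in front of the disagreement complexity term (due to the $\tfrac{1}{2}$ in front of $\dDT(\Q)$); replacing it by $\tfrac{c'}{\mt c}$ is a harmless upper bound that yields the symmetric expression in the statement.

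There is no genuine obstacle in this proof: it is a routine composition of an already-established deterministic bound with two independent PAC-Bayesian concentration results. The only care required is to track (i)~the factor $\tfrac12$ between $\dDT(\Q)$ and the Gibbs risk, (ii)~the doubled $2\KL(\posterior\|\prior)$ coming from the paired-voter construction inside the proof of Corollary~\ref{theo:catoni_new}, and (iii)~the $\ln(2/\delta)$ arising from the union bound over the source and target samples. The limit $q\to\infty$ is painless because Theorem~\ref{theo:new_bound_general} holds uniformly in $q$.
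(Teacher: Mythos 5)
Your proof is correct and follows essentially the paper's own argument: the paper likewise bounds $\dDT(\Q)$ and $\ePS(\Q)$ separately via Corollary~\ref{theo:catoni_new} with confidence $1-\frac{\delta}{2}$ each, and combines the two empirical bounds through Theorem~\ref{theo:new_bound_general} taken at $q\to\infty$. Your side remark is also accurate: the direct computation gives the slightly tighter coefficient $\tfrac{c'}{2\,\mt\times c}$ on the target complexity term, and the stated $\tfrac{c'}{\mt\times c}$ is a harmless relaxation of it.
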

	\begin{proof}
		We bound  separately $\dDT(\Q)$ and $\ePS(\Q)$ using Corollary~\ref{theo:catoni_new} (with probability $1{-}\frac{\delta}{2}$ each), and then combine the two upper bounds according to Theorem~\ref{theo:new_bound_general}.
	\end{proof}
		From an optimization perspective, the problem suggested by the bound of Theorem~\ref{theo:catoni_new_general} is much more convenient to minimize than the PAC-Bayesian bound derived from  Theorem~\ref{theo:pbda} in \citet{pbda}.
		The former is \emph{smoother} than the latter: the absolute value related to the domain disagreement $\dis(\DS,\DT)$ of Equation~\eqref{eq:domain_disagreement} disappears in benefit of the domain divergence $\binf$, which is constant and can be considered as an hyperparameter of the algorithm. Additionally, Theorem~\ref{theo:pbda} requires equal source and target sample sizes while Theorem~\ref{theo:catoni_new_general} allows $\ms{\neq}\mt$.
		Moreover, recall that in \citet{pbda} the \mbox{$\posterior$-dependent} non-constant term $\lambda(\posterior)$ is ignored. In our new analysis, such compromise is not mandatory in order to apply the theoretical result to real  problems, since the non-estimable term $\ets$ is constant and does not depend on the learned $\posterior$. Hence, we can neglect $\ets$  without any impact on the optimization problem described in the next section. Beside, it is  realistic to consider $\ets$ as a small quantity in situations where the source and target supports are similar.

	\begin{figure*}[t]
		\begin{minipage}[c]{.3\textwidth}\centering
			\includegraphics[width=1\textwidth]{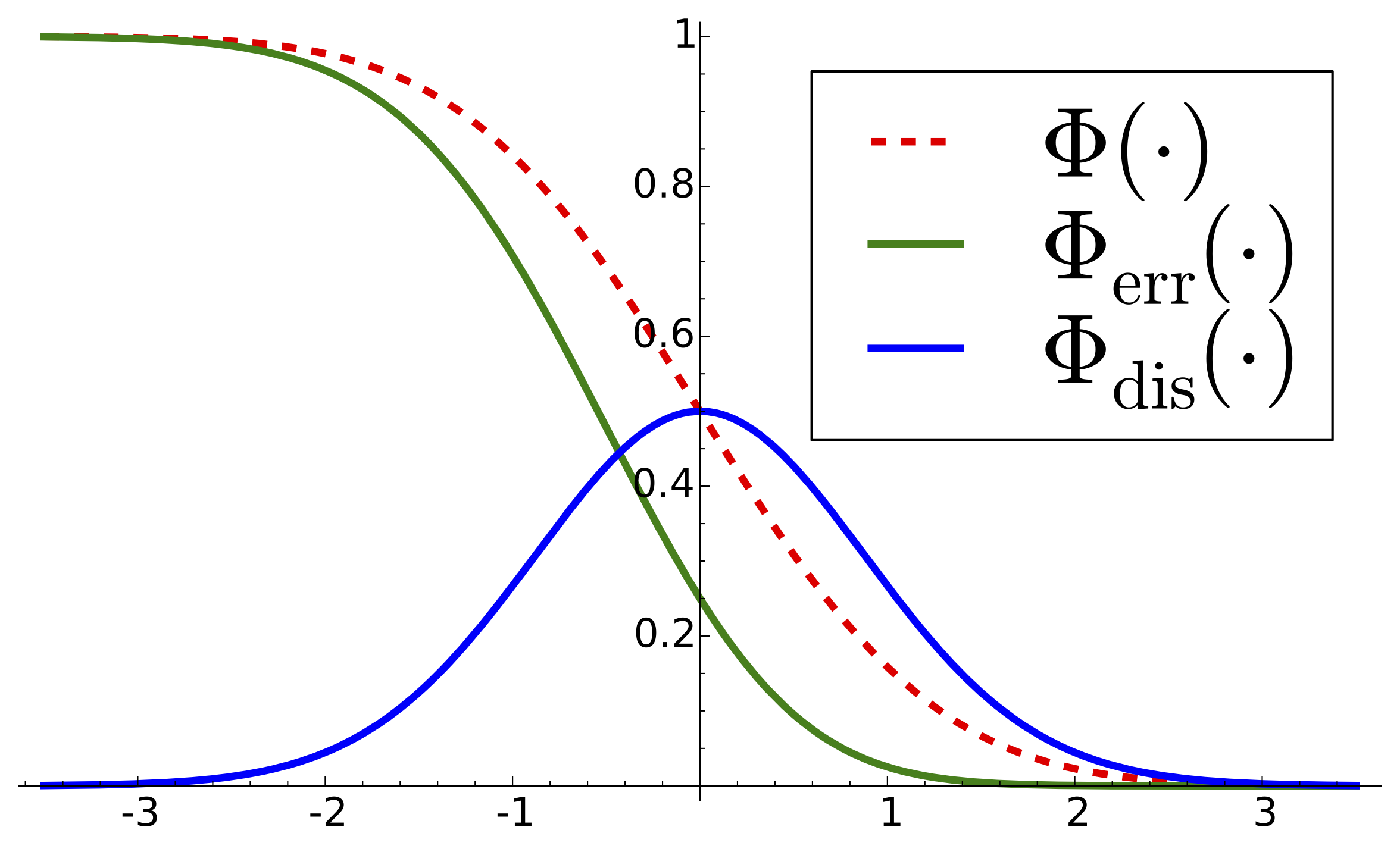}
			\caption{Graphical representation of the loss functions given by the specialization to linear classifiers.
				\label{fig:losses}}
		\end{minipage}\hfill
		\begin{minipage}[c]{.65\textwidth} 
			\centering
			\includegraphics[width=0.32\textwidth,trim=20mm 10mm 13mm 14mm]{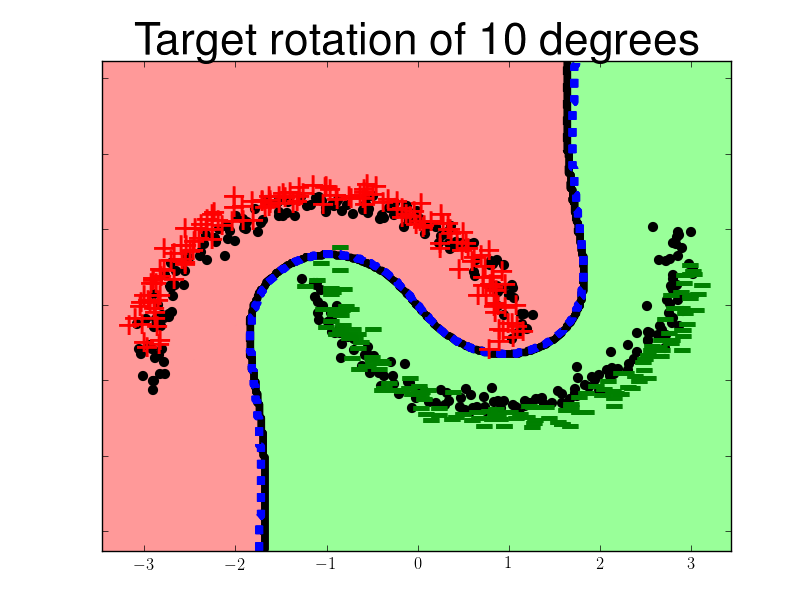}\hfill
			\includegraphics[width=0.32\textwidth,trim=17mm 10mm 17mm 14mm]{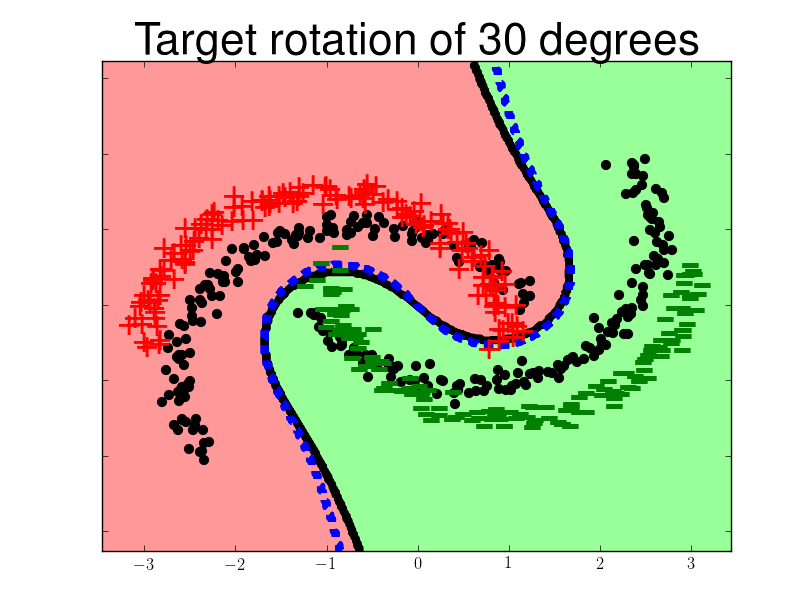}\hfill
			\includegraphics[width=0.32\textwidth,trim=17mm 10mm 17mm 14mm]{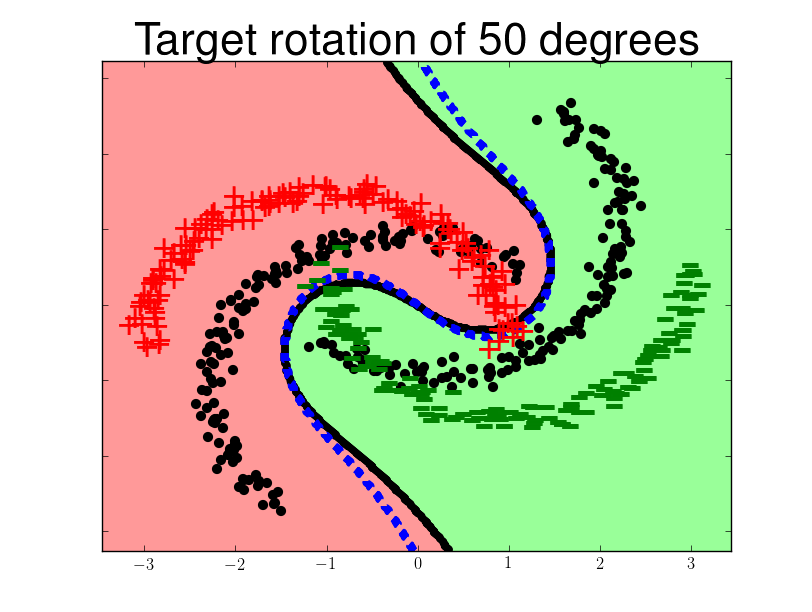}
			\caption{Decision boundaries of \algo~on the \emph{intertwining moons} toy problem, for fixed parameters $B{=}C{=}1$, and a RBF kernel $k(\xb,\xb')=\exp({- \|\xb-\xb'\|^2})$. The target points are black. The positive, {\it resp.} negative, source points are red, {\it resp.} green. The blue dashed line shows the decision boundaries of
				algorithm {\sc pbda}~\cite{pbda}.
				\label{fig:lunes}}
		\end{minipage}
	\end{figure*}
	
\section{Specialization to Linear Classifiers}
\label{section:so_specialized}

In order to derive an algorithm, we now specialize the bounds of Theorems~\ref{theo:new_bound_general} and~\ref{theo:catoni_new_general} to the risk of a linear classifier $h_{\wbf}$, defined by a weight vector~${\wbf}\in\Rbb^d$\,:\\[2mm]
\centerline{$
\forall \xbf\in\X,\ \ 
h_{\wbf}(\xbf) \, =\,   \sgn\left(\wbf\cdot\xbf\right).$}\\[2mm]
%
The taken approach is the one privileged in numerous PAC-Bayesian works \citep[\eg,][]{Langford02,AmbroladzePS06,mcallester-keshet-11,Parrado-Hernandez12,germain2009pac,pbda}, as it makes the risk of the linear classifier $h_\wb$ and the risk of a (properly parametrized) majority vote coincide, while in the same time promoting large margin classifiers. 
To this end, let $\Hcal$ be the set of \emph{all} linear classifiers over the input space,
$\Hcal \! =\!  \left\{ h_{\wbf'} \mid \wbf'\in\Rbb^d  \right\},$
and let $\posterior_\wb$ over $\Hcal$ be a \emph{posterior} distribution,  {\it resp.} a prior distribution $\prior_{\bf 0}$, 
that is constrained to be a spherical Gaussian with identity covariance matrix centered on vector $\wb$, {\it resp.} ${\bf 0}$, 
\begin{align*}
	\forall h_{\wb'}\in \Hcal,\quad   \posterior_\wb(h_{\wbf'})  \, &=\,   \left( \tfrac{1}{\sqrt{2\pii}} \right)^{ d}
	e^{-\tfrac{1}{2}\|{\wbf'}-\wb\|^2},
	\\
	\text{and}  \quad \prior_{\bf 0}(h_{\wbf'}) \,&=\, \left( \tfrac{1}{\sqrt{2\pii}} \right)^{ d}
	e^{-\tfrac{1}{2}\|{\wbf'}\|^2}.
\end{align*}
The KL-divergence between  $\posterior_\wb$ and  $\pi_{\bf 0}$ simply is  
\begin{equation}
\label{eq:KL_linear}
\KL(\posterior_\wb\|\prior_{\bf 0})   =  \tfrac{1}{2}\|\wb\|^2\,.
\end{equation}
Thanks to this parameterization, 
the majority vote classifier~$B_{\posterior_\wb}$ corresponds to the one of the linear classifier $h_\wb$ (see above cited PAC-Bayesian works). That is,\\[1mm] 
\centerline{$\displaystyle
	\forall\,\xbf{\in} X, \wb{\in} \Hcal, \ 
	h_\wb(\xbf) 
	=  \sgn\left[ \esp{h_{\wb'} \sim \posterior_\wb} \!\!\!\!h_{\wb'}(\xbf) \right]\!
	 = \!B_{\posterior_\wb}(\xbf)\,.$}\\[1mm]
Then, 
$\Risk_\P (h_\wb) =  \Risk_\P (B_{\posterior_\wb})$
for any data distribution~$\P$. \\
Moreover, 
\citet{Langford02} showed that the closely related Gibbs risk (Equation~\ref{eq:gibbs_risk}) is
related to the linear classifier margin $y\, \frac{\wb \cdot \xbf}{\|\xbf\|}$, as follows:
\begin{equation} \label{eq:gibbs_risk_linear}
	\Risk_\P (G_{\posterior_\wb})
	\ = \ \Esp_{(\xbf,y)\sim \P} \Phi \left(  y\, \frac{\wb \cdot \xbf}{\|\xbf\|}  \right),
\end{equation}
where 
$\Phi(x)   {=}  \tfrac12 {-} \frac12\Erf\big( \frac{x}{\sqrt{2}} \big),$ and 
\mbox{$\Erf(x)  =  \tfrac{2}{\sqrt{\pii}} \int_0^x e^{-t^2} \text{d}t$}  
is the Gauss error function. 
Here, $\Phi(x)$ can be seen as a \emph{smooth} surrogate---sometimes called the \emph{probit loss} \citep[\eg,][]{mcallester-keshet-11}---of the zero-one loss function $\I{x\leq0}$ relying on $y\, \frac{\wb \cdot \xbf}{\|\xbf\|}$. Note that $\|\wb\|$ plays an important role on the value of $\Risk_\P (G_{\posterior_\wb})$, but not on $\Risk_\P (h_\wb)$. Indeed, $\Risk_\P (G_{\posterior_\wb})$ tends to $\Risk_\P (h_\wb)$ as $\|\wb\|$ grows, which can provide \emph{very tight bounds} \citep[see the empirical analyses of][]{AmbroladzePS06,germain2009pac}.
In the PAC-Bayesian context, $\|\wb\|$ turns out to be a measure of \emph{complexity} of the learned classifier, as Equation~\eqref{eq:KL_linear} shows.
\\
We now seek to express the \emph{expected disagreement} $\dD(\posterior_\wb)$ and the \emph{expected joint error}  $\eP(\posterior_\wb)$ of Equations~\eqref{eq:voters_dis} and~\eqref{eq:erreur_jointe} related to the parameterized distribution $\posterior_\wb$.  As shown in \citet{pbda} the former is given by\\[1mm]
\centerline{$\displaystyle
	\dD(\posterior_\wb)
	=  \Esp_{\tbf\sim \D} \Phidis  \left(  \frac{\wb \cdot \xbf}{\|\xbf\|}  \right), $}\\[1mm]
where $\Phidis(x) = 2{\times}\Phi(x){\times}\Phi(-x)$. 
Following a similar approach, we obtain,
for all $\wb\in\Rbb$,
\begin{eqnarray*}
	\ep(\posterior_\wb)
	&=&
	\Esp_{\exbf\sim\P} \Esp_{h\sim\posterior_\wb} \Esp_{h'\sim\posterior_\wb}  \I{h(\xbf){\ne }y}\I{h'(\xbf){\ne} y} \\
	&=&
	\Esp_{\exbf\sim\P}  \Esp_{h\sim\posterior_\wb}   \I{h(\xbf){\ne} y}
	\Esp_{h'\sim\posterior_\wb} \I{h'(\xbf){\ne} y}\\[-1.5mm]
	&=&
	\Esp_{\exbf\sim\P}  
	\Phierr \left(  y\, \frac{\wb \cdot \xbf}{\|\xbf\|}  \right),
\end{eqnarray*}
with  $\Phierr(x) \!=\! \big[\Phi(x)\big]^2$.
As function $\Phi$ in Equation~\eqref{eq:gibbs_risk_linear}, functions $\Phierr$ and $\Phidis$ defined above can be interpreted as loss functions for linear classifiers (illustrated by Figure~\ref{fig:losses}).
\\[0.5mm]	
\textbf{Domain adaptation bound.}
Theorem~\ref{theo:new_bound_general} specialized to linear classifiers gives the following corollary. Note that, as mentioned above, $\RPT(h_\wb) = \RPT(B_{\posterior_\wb})  \leq 2\,\RPT(G_{\posterior_\wb})$. 
\begin{cor}\label{cor:new_bound_linear}
 Let $\PS$ and $\PT$ respectively be the source and the target domains on $\XY$. For all $\wb\in \Rbb$, we have :\\[1mm]
\centerline{$\displaystyle \RPT(h_\wb) \, \leq \, \dDT(\Q_\wb) +
			2\,\binf \times  \ePS(\Q_\wb)
			+ 2\,\ets  \,,$}
\end{cor}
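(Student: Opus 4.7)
The plan is to derive the corollary by composing two ingredients already assembled in the paper: the linear-classifier parameterization and the general domain adaptation bound of Theorem~\ref{theo:new_bound_general}. The opening remark of the corollary points the way, namely that for the Gaussian posterior $\posterior_\wb$ we have $\RPT(h_\wb)=\RPT(B_{\posterior_\wb})$ (from the coincidence of the sign of $\wb\cdot\xbf$ with the sign of $\Esp_{\wb'\sim\posterior_\wb}\sgn(\wb'\!\cdot\xbf)$), so the classical PAC-Bayesian bridge $\Risk_\P(B_\posterior)\leq 2\,\Risk_\P(G_\posterior)$ gives the starting inequality
\begin{equation*}
\RPT(h_\wb)\;\leq\;2\,\RPT(G_{\posterior_\wb}).
\end{equation*}

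Next, I would apply Theorem~\ref{theo:new_bound_general} to the posterior $\posterior_\wb$, and then take the limit $q\to\infty$. The bound of that theorem reads
\begin{equation*}
\RPT(G_{\posterior_\wb})\;\leq\;\tfrac12\,\dDT(\Q_\wb)+\bq\,[\ePS(\Q_\wb)]^{1-\frac1q}+\ets.
\end{equation*}
By the very definition of $\binf$ in Equation~\eqref{eq:binf}, we have $\bq\to\binf$ as $q\to\infty$; and since $\ePS(\Q_\wb)\in[0,1]$, we also have $[\ePS(\Q_\wb)]^{1-1/q}\to \ePS(\Q_\wb)$ (trivially if $\ePS(\Q_\wb)=0$, and by continuity of $a\mapsto a^r$ at $r=1$ otherwise). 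Passing to the limit yields
\begin{equation*}
\RPT(G_{\posterior_\wb})\;\leq\;\tfrac12\,\dDT(\Q_\wb)+\binf\cdot\ePS(\Q_\wb)+\ets.
\end{equation*}

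Multiplying both sides by $2$ and combining with the first inequality delivers exactly the claimed bound on $\RPT(h_\wb)$. There is no real obstacle here; the only step that might warrant a line of justification is that $\binf$ is well-defined as the $q\to\infty$ limit of $\bq$, which follows from the standard fact that the $L^q(\PS)$-norm of the density ratio $\PT/\PS$ tends to its essential supremum as $q\to\infty$. The reason the factor $2$ appears in front of $\binf\,\ePS(\Q_\wb)$ and $\ets$, but not in front of $\dDT(\Q_\wb)$, is simply that $\tfrac12\dDT(\Q_\wb)$ already carried a one-half factor inside Theorem~\ref{theo:new_bound_general}.
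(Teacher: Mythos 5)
Your proposal is correct and matches the paper's (implicit) argument exactly: the paper derives Corollary~\ref{cor:new_bound_linear} precisely by applying Theorem~\ref{theo:new_bound_general} with $q\to\infty$ (where $\bq\to\binf$ and the exponent $1-\frac1q\to 1$) and then doubling via the identity $\RPT(h_\wb)=\RPT(B_{\posterior_\wb})\leq 2\,\RPT(G_{\posterior_\wb})$ stated just before the corollary. Your added justification of the limit passage (convergence of the $L^q(\PS)$-norm of the density ratio to its essential supremum, and continuity of $a\mapsto a^{1-1/q}$) is sound and merely makes explicit what the paper leaves unstated.
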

Figure~\ref{fig:losses} leads to an insightful geometric interpretation of the domain adaptation trade-off
promoted by Corollary~\ref{cor:new_bound_linear}.
For fixed values of $\binf$ and $\ets$, the target risk $\RPT(h_\wb)$ is upper-bounded by a ($\beta_\infty$-weighted) sum of two losses.
 The  expected $\Phierr$-loss (\ie, the joint error) is computed on the (labeled) source domain; it aims to label the source examples correctly, but is more permissive on the required margin than the $\Phi$-loss (\ie, the Gibbs risk). The expected $\Phidis$-loss (\ie, the disagreement) is computed on the target (unlabeled) domain; it promotes large \emph{unsigned} target margins.
Thus, if a target domain fulfills the \emph{cluster assumption} (described in Section~\ref{section:assumptions}),  $\dDT(\posterior_\wb)$ will be low when the decision boundary crosses a low-density region between the homogeneous labeled clusters. 
Hence, Corollary~\ref{cor:new_bound_linear} reflects that some source errors may be allowed if, doing so, the separation of the target domain is improved.
\\[0.5mm]
\textbf{Generalization bound and learning algorithm.}
Theorem~\ref{theo:catoni_new_general} specialized to linear classifiers gives the following.
	\begin{cor}
		\label{cor:gen_bound linear}
		For any domains $\PS$ and $\PT$ over $\X{\times}\Y$, any $\delta{\in}(0,1]$, any 
		$a {>} 0$ and $b{>}0$, with a probability at least $1{-}\delta$ over the choices of $S{\sim}(\PS)^\ms$ and $ T {\sim} (\DT)^\mt$, we have \\[-4mm]
		\begin{align*}
		\forall \wb\in\Rbb: \,
		\RPT(h_\wb) & \leq\, c'\,\dT(\posterior_\wb)  +2\,b'\,
		\eS(\posterior_\wb)  + 2\,\ets \\
		& {}+ 2\left(\tfrac{c'}{\mt\times c} + \tfrac{b'}{\ms\times b} \right)   \Big(\|\wb\|^2 + \ln \tfrac{2}{\delta}\Big).
		\end{align*}
	\end{cor}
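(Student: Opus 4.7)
The plan is to obtain the bound by a direct specialization of Theorem~\ref{theo:catoni_new_general} to the Gaussian posterior $\posterior_\wb$ and prior $\prior_{\bf 0}$ introduced in this section, and then to translate the resulting Gibbs-risk bound into a bound on the linear classifier $h_\wb$.

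First, I would instantiate Theorem~\ref{theo:catoni_new_general} with $\posterior=\posterior_\wb$ and $\prior=\prior_{\bf 0}$. This gives, with probability at least $1-\delta$ over the draws of $S\sim(\PS)^\ms$ and $T\sim(\DT)^\mt$, the inequality
\begin{align*}
\RPT(G_{\posterior_\wb}) \leq c'\,\tfrac12\,\dT(\posterior_\wb) + b'\,\eS(\posterior_\wb) + \ets + \left(\tfrac{c'}{\mt\,c} + \tfrac{b'}{\ms\,b}\right)\!\Big(2\,\KL(\posterior_\wb\|\prior_{\bf 0}) + \ln\tfrac{2}{\delta}\Big),
\end{align*}
valid simultaneously for all $\wb\in\Rbb^d$ (the universal quantification over $\posterior$ in Theorem~\ref{theo:catoni_new_general} transfers to a universal quantification over $\wb$, since the map $\wb\mapsto\posterior_\wb$ is deterministic).

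Second, I would substitute the closed-form KL expression $\KL(\posterior_\wb\|\prior_{\bf 0})=\tfrac12\|\wb\|^2$ from Equation~\eqref{eq:KL_linear}, so that the factor $2\,\KL(\posterior_\wb\|\prior_{\bf 0})$ becomes simply $\|\wb\|^2$.

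Third, I would pass from the Gibbs risk to the classifier risk using the identity $B_{\posterior_\wb}=h_\wb$ (recalled just above the corollary) together with the standard PAC-Bayesian inequality $\RPT(B_{\posterior_\wb})\leq 2\,\RPT(G_{\posterior_\wb})$. Multiplying both sides of the previous display by $2$ doubles the coefficients of $\eS(\posterior_\wb)$, of $\ets$, and of the complexity term, while turning $c'\cdot\tfrac12\,\dT(\posterior_\wb)$ into $c'\,\dT(\posterior_\wb)$, matching the statement exactly.

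There is no real obstacle here: all the analytical work was already done in Theorem~\ref{theo:catoni_new_general} and in the linear-classifier identities derived earlier in this section. The only things to watch are (i) keeping track of the factor $2$ introduced by the Gibbs-to-Bayes conversion, which is what causes the cancellation of the $\tfrac12$ in front of $\dT(\posterior_\wb)$ while doubling the other quantities, and (ii) noting that the $\ets$ term is unaffected by the choice of $\posterior$, so it carries through the specialization unchanged up to the factor $2$.
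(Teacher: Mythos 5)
Your proposal is correct and matches the paper's own (implicit) argument exactly: the paper proves Corollary~\ref{cor:gen_bound linear} precisely by instantiating Theorem~\ref{theo:catoni_new_general} with the Gaussian pair $(\posterior_\wb,\prior_{\bf 0})$, substituting $\KL(\posterior_\wb\|\prior_{\bf 0})=\tfrac12\|\wb\|^2$ from Equation~\eqref{eq:KL_linear}, and applying $\RPT(h_\wb)=\RPT(B_{\posterior_\wb})\leq 2\,\RPT(G_{\posterior_\wb})$, with the factor $2$ cancelling the $\tfrac12$ on $\dT(\posterior_\wb)$ and doubling the remaining terms just as you describe. Your bookkeeping of the constants and of the $\wb$-independence of $\ets$ is accurate, so there is nothing to add.
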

	For a source $S  {=} \{(\xb_i, y_i)\}_{i=1}^{\ms}$ and a target $T  {=}  \{(\xbf_i')\}_{i=1}^{\mt}$ samples of potentially \emph{different size},  and some hyperparameters $C{>}0$, $B{>}0$, 
	minimizing the next objective function w.r.t $\wb{\in}\Rbb$ is equivalent to minimize the above bound.
	\begin{align} \label{eq:ca}
		C&\,\dT(\posterior_\wb) + B\,\eS(\posterior_\wb) 	+ 
		 \|\wb\|^2 
		\\ \nonumber
		&= \  C \sum_{i=1}^{\mt} \Phidis \left(  \tfrac{\wb \cdot \tbfi'}{\|\tbfi'\|}  \right)  
\!+ B\sum_{i=1}^\ms \Phierr \left(  y_i\, \tfrac{\wb \cdot \xbf_i}{\|\xbf_i\|}  \right)
		+\|\wb\|^2\,.
	\end{align}
	We call the optimization of Equation~\eqref{eq:ca} by gradient descent the \algo algorithm, for Domain Adaptation of Linear Classifiers.
	The kernel trick applies to \algo. 
	That is, given a kernel $k\!:\!\R^d {\times} \R^d{\rightarrow}\R$, one can express a linear classifier in a \emph{RKHS}\footnote{It is non-trivial to show that the kernel trick holds when $\prior_\mathbf{0}$ and $\posterior_\wb$ are Gaussian over infinite-dimensional feature space. As mentioned by \citet{mcallester-keshet-11}, it is, however, the case provided we consider Gaussian processes as measure of distributions $\prior_\mathbf{0}$ and $\posterior_\wb$ over (infinite) $\Hcal$.} by a dual weight vector $\alphab\in \R^{\ms+\mt}$\,:\\[1mm]
\centerline{$\displaystyle	h_\wb(\cdot) =\sgn\left[\sum_{i=1}^\ms \alpha_i k(\xb_i, \cdot) +  \sum_{i=1}^\mt \alpha_{i+\ms} k(\xbf_i', \cdot)\right].$}\\[1mm]
	Even though the objective function is highly non-convex, we achieved good empirical results by minimizing the ``kernelized'' version of Equation~\eqref{eq:ca} by gradient descent, with a uniform weight vector as a starting point. More details are given in the supplementary material.

		\begin{table}[t]
			\centering
			\caption{Error rates on \emph{Amazon} dataset. 
				Best risks appear in {\bf bold} and seconds are in {\it italic}.
				\label{tab:res_sentiments}}
			{ \scriptsize
				\begin{tabular}{|l||c|c@{\hspace{3mm}}c@{\hspace{3mm}}c|c|}
					\toprule
					& {\sc svm} & {\sc dasvm} & {\sc coda} & {\sc pbda} & 
					\algo \\ 
					& {\tiny (CV)}  & {\tiny(RCV)} & {\tiny(RCV)}  &  {\tiny(RCV)}   & 
					 {\tiny(RCV)} \\ 
					\midrule
					{\small books$\rightarrow$DVDs} 
					& ${\it 0.179}$ & $0.193$ & ${0.181}$ & $0.183$ & ${\bf 0.178}$ \\ 
					{\small books$\rightarrow$electro }
					& $0.290$ & ${\it 0.226}$ & ${0.232}$ & $0.263$ & ${\bf 0.212}$\\ 
					{\small books$\rightarrow$kitchen }
					& $0.251$ & ${\bf 0.179}$ & ${ 0.215}$ & $0.229$ & ${\it 0.194}$\\ 
					{\small DVDs$\rightarrow$books  }
					& $0.203$ & $0.202$ & $0.217$ & ${\it 0.197}$ & ${\bf 0.186}$\\ 
					{\small DVDs$\rightarrow$electro  }
					& $0.269$ & ${\bf 0.186}$ & ${\it 0.214}$ & $0.241$ & $0.245$\\ 
					{\small DVDs$\rightarrow$kitchen  }
					& $0.232$ & $0.183$ & ${\it 0.181}$ & $0.186$ & ${\bf 0.175}$\\ 
					{\small electro$\rightarrow$books }
					& $0.287$ & $0.305$ & $0.275$ & ${\bf 0.232}$ & ${\it 0.240}$\\ 
					{\small electro$\rightarrow$DVDs }
					& $0.267$ & ${\bf 0.214}$ & $0.239$ & ${\it 0.221}$ & $0.256$\\ 
					{\small electro$\rightarrow$kitchen  }
					& ${\it 0.129}$ & $0.149$ & ${0.134}$ & $0.141$ & ${\bf 0.123}$\\ 
					{\small kitchen$\rightarrow$books  }
					& $0.267$ & $ 0.259$ & ${\it 0.247}$ & ${\it 0.247}$ & ${\bf 0.236}$\\ 
					{\small kitchen$\rightarrow$DVDs  }
					& $0.253$ & ${\bf 0.198}$ & $0.238$ & ${0.233}$ & ${\it 0.225}$\\ 
					{\small kitchen$\rightarrow$electro  }
					& $0.149$ & $0.157 $ & $0.153$ & ${\bf 0.129}$ & ${\it 0.131}$\\ 
					\midrule
					{\small Average  }
					& $0.231 $ & ${\it 0.204} $ & $0.210 $ & ${0.208} $ & ${\bf 0.200} $ \\ 
					\bottomrule
				\end{tabular}
			}
		\end{table}
		
\section{Experimental Results}
\label{sec:expe}

Firstly, Figure~\ref{fig:lunes} illustrates the behavior
of the decision boundary of our algorithm \algo on an intertwining moons toy problem\footnote{We generate each pair of moons with the {\tt make\_moons} function provided in {\tt scikit-learn}~\cite{scikit-learn}.}, where each moon corresponds to a label. The target domain, for which we have no label, is a rotation of the source one. The figure shows clearly that \algo succeeds to adapt to the target domain, even for a rotation angle of $50\degree$.
We see that \algo does not rely on the restrictive \emph{covariate shift} assumption, as some source examples are misclassified.
This behavior illustrates the \algo trade-off in action, that concedes some errors on the source sample to lower the disagreement on the target sample. 
 	
	Secondly, we evaluate \algo
	on the classical {\it Amazon.com Reviews} benchmark~\cite{BlitzerMP06} according to the setting used by \citet{coda,pbda}. This dataset contains reviews of four types of products (books, DVDs, electronics, and kitchen appliances) described with about $100,000$ attributes.
	Originally, the reviews were labeled with a rating from $1$ to $5$.
	\citet{coda} proposed a simplified binary setting by regrouping ratings into two classes (products rated lower than $3$ and products rated higher than $4$). Moreover, they reduced the dimensionality to about $40,\!000$ by only keeping the features appearing at least ten times for a given domain adaptation task. 
	Finally, the data are pre-processed with a tf-idf re-weighting. 
	A domain corresponds to a kind of product. Therefore, we perform twelve domain adaptation tasks. 
	For instance, ``books$\to$DVD's'' is the task for which the source domain is ``books'' and the target one is ``DVDs''. 
	We compare \algo with the classical non-adaptive algorithm {\sc svm} (trained only on the source sample),
	the adaptive algorithm {\sc dasvm}~\cite{dasvm},  the adaptive co-training {\sc coda}~\cite{coda}, and the PAC-Bayesian domain adaptation algorithm {\sc pbda}~\cite{pbda} based on Theorem~\ref{theo:pbda}. 
	Note that, in \citet{pbda}, {\sc dasvm}
	has shown better accuracy than {\sc svm}, {\sc coda} and {\sc pbda}.
	Each parameter is selected with a grid search thanks to a usual cross-validation (CV) on the source sample for {\sc svm},  
	and thanks to a reverse validation procedure\footnote{For details on the reverse validation procedure, see \citet{dasvm,Zhong-ECML10}. Other details on our experimental protocol are given in supplementary material.} (RCV) for {\sc coda}, {\sc dasvm}, {\sc pbda}, and \algo.
	The algorithms use a linear kernel and consider $2,\!000$ labeled source examples and $2,\!000$ unlabeled target examples. 
	Table~\ref{tab:res_sentiments} reports the error rates of all the methods evaluated on the same separate target test sets proposed by \citet{coda}.

	Above all, 
        the adaptive approaches show the best result, implying that tackling this problem with a domain adaptation method is reasonable. Then, our new method \algo~is the best algorithm overall on this task. 
	Except for the two adaptive tasks between ``electronics'' and ``DVDs'', \algo~is either the best one (six times), or the second one (four times). 
	Moreover, according to a Wilcoxon signed rank test with a $5\%$ significance level, we obtain a probability of $89.5\%$ that {\sc DALC} is better than {\sc PBDA}. 
This test tends to confirm that our new bound improves the analysis done previously in \citet{pbda}, in addition to being more interpretable.

	\section{Conclusion}
	\label{sec:conclu}
	\label{sec:conclusion}
	We propose a new domain adaptation analysis for majority vote learning.
	It relies on an upper bound on the target risk, expressed 
        as a trade-off between the voters' disagreement on the target domain, the voters' joint errors on the source one, and a term 
        reflecting the worst case error
in regions where the source domain is non-informative. To the best of our knowledge, a crucial novelty of our contribution is that the trade-off is controlled by the divergence $\bq$ (Equation~\ref{eq:bq}) between the domains: The divergence is not an additive term (as in many domain adaptation bounds) but is a factor weighting the importance of the source information.
		Our analysis, combined with a PAC-Bayesian generalization bound, leads to a new domain adaptation algorithm for linear classifiers. 
The empirical experiments show that our new algorithm outperforms the previous PAC-Bayesian approach \citep{pbda}.

As future work, we first aim at investigating the case where the domains' divergence $\bq$ can be estimated, \ie, when the covariate shift assumption holds or when some target labels are available. In these scenarios, $\bq$ might not be considered as a hyperparameter to tune.
\\
%
	Last but not least, the term $\ets$ of our bound---suggesting that the two domains should live in the same regions---can be dealt with a representation learning approach.
		As mentioned in Section~\ref{section:representation_learning}, this could be an incentive to combine our learning algorithm with existing representation learning techniques.  
In another vein, considering an \emph{active learning} setup \citep[as in][]{berlind-15}, one could query the labels of target examples to estimate the value bounded by $\ets$.
We see this as a great source of inspiration for new algorithms for this learning paradigm.



	\section*{Acknowledgements}
This work was supported in
part by the French project LIVES ANR-15-CE23-0026-03, and in part
by NSERC discovery grant 262067. 
	
	\bibliography{biblio}

\begin{thebibliography}{46}
\providecommand{\natexlab}[1]{#1}
\providecommand{\url}[1]{\texttt{#1}}
\expandafter\ifx\csname urlstyle\endcsname\relax
  \providecommand{\doi}[1]{doi: #1}\else
  \providecommand{\doi}{doi: \begingroup \urlstyle{rm}\Url}\fi

\bibitem[Ambroladze et~al.(2006)Ambroladze, Parrado-Hern{\'a}ndez, and
  Shawe-Taylor]{AmbroladzePS06}
Ambroladze, A., Parrado-Hern{\'a}ndez, E., and Shawe-Taylor, J.
\newblock Tighter {PAC-Bayes} bounds.
\newblock In \emph{NIPS}, pp.\  9--16, 2006.

\bibitem[Ben{-}David et~al.(2006)Ben{-}David, Blitzer, Crammer, and
  Pereira]{BDnips}
Ben{-}David, S., Blitzer, J., Crammer, K., and Pereira, F.
\newblock Analysis of representations for domain adaptation.
\newblock In \emph{NIPS}, pp.\  137--144, 2006.

\bibitem[Ben{-}David et~al.(2010)Ben{-}David, Blitzer, Crammer, Kulesza,
  Pereira, and Vaughan]{BDjournal}
Ben{-}David, S., Blitzer, J., Crammer, K., Kulesza, A., Pereira, F., and
  Vaughan, J.~Wortman.
\newblock A theory of learning from different domains.
\newblock \emph{Mach. Learn.}, 79\penalty0 (1-2):\penalty0 151--175, 2010.

\bibitem[Ben{-}David et~al.(2012)Ben{-}David, Shalev{-}Shwartz, and
  Urner]{bendavid-12}
Ben{-}David, S., Shalev{-}Shwartz, S., and Urner, R.
\newblock Domain adaptation--can quantity compensate for quality?
\newblock In \emph{ISAIM}, 2012.

\bibitem[Berlind \& Urner(2015)Berlind and Urner]{berlind-15}
Berlind, C. and Urner, R.
\newblock Active nearest neighbors in changing environments.
\newblock In \emph{ICML}, pp.\  1870--1879, 2015.

\bibitem[Blitzer(2007)]{blitzer2007domain}
Blitzer, J.
\newblock \emph{Domain adaptation of natural language processing systems}.
\newblock PhD thesis, UPenn, 2007.

\bibitem[Blitzer et~al.(2006)Blitzer, McDonald, and Pereira]{BlitzerMP06}
Blitzer, J., McDonald, R., and Pereira, F.
\newblock Domain adaptation with structural correspondence learning.
\newblock In \emph{EMNLP}, pp.\  120--128, 2006.

\bibitem[Bruzzone \& Marconcini(2010)Bruzzone and Marconcini]{dasvm}
Bruzzone, L. and Marconcini, M.
\newblock Domain adaptation problems: A {DASVM} classification technique and a
  circular validation strategy.
\newblock \emph{IEEE Trans. Pattern Anal. Mach. Intel.}, 32\penalty0
  (5):\penalty0 770--787, 2010.

\bibitem[Catoni(2007)]{catoni2007pac}
Catoni, O.
\newblock \emph{{PAC-Ba}yesian supervised classification: the thermodynamics of
  statistical learning}, volume~56.
\newblock Inst. of Mathematical Statistic, 2007.

\bibitem[Chen et~al.(2011)Chen, Weinberger, and Blitzer]{coda}
Chen, M., Weinberger, K.Q., and Blitzer, J.
\newblock Co-training for domain adaptation.
\newblock In \emph{NIPS}, pp.\  2456--2464, 2011.

\bibitem[Chen et~al.(2012)Chen, Xu, Weinberger, and Sha]{chen-12}
Chen, M., Xu, Z.~E., Weinberger, K.~Q., and Sha, F.
\newblock Marginalized denoising autoencoders for domain adaptation.
\newblock In \emph{ICML}, pp.\  767--774, 2012.

\bibitem[Cortes \& Mohri(2014)Cortes and Mohri]{CortesM14}
Cortes, C. and Mohri, M.
\newblock Domain adaptation and sample bias correction theory and algorithm for
  regression.
\newblock \emph{Theor. Comput. Sci.}, 519:\penalty0 103--126, 2014.

\bibitem[Cortes et~al.(2010)Cortes, Mansour, and Mohri]{CortesMM10}
Cortes, C., Mansour, Y., and Mohri, M.
\newblock Learning bounds for importance weighting.
\newblock In \emph{NIPS}, pp.\  442--450, 2010.

\bibitem[Cortes et~al.(2015)Cortes, Mohri, and Medina]{cortes-15}
Cortes, C., Mohri, M., and Medina, A.~Mu{\~{n}}oz.
\newblock Adaptation algorithm and theory based on generalized discrepancy.
\newblock In \emph{{ACM} {SIGKDD}}, pp.\  169--178, 2015.

\bibitem[{Daum\'e III}(2007)]{daume07easyadapt}
{Daum\'e III}, H.
\newblock Frustratingly easy domain adaptation.
\newblock In \emph{ACL}, 2007.

\bibitem[Ganin \& Lempitsky(2015)Ganin and Lempitsky]{ganin-15}
Ganin, Y. and Lempitsky, V.~S.
\newblock Unsupervised domain adaptation by backpropagation.
\newblock In \emph{ICML}, pp.\  1180--1189, 2015.

\bibitem[Ganin et~al.(2016)Ganin, Ustinova, H, Germain, Larochelle, Laviolette,
  Marchand, and Lempitsky]{ganin-16}
Ganin, Y., Ustinova, E., H, Ajakan, Germain, P., Larochelle, H., Laviolette,
  F., Marchand, M., and Lempitsky, V.
\newblock Domain-adversarial training of neural networks.
\newblock \emph{JMLR}, 17\penalty0 (59):\penalty0 1--35, 2016.

\bibitem[Germain et~al.(2009)Germain, Lacasse, Laviolette, and
  Marchand]{germain2009pac}
Germain, P., Lacasse, A., Laviolette, F., and Marchand, M.
\newblock {PAC-Ba}yesian learning of linear classifiers.
\newblock In \emph{ICML}, pp.\  353--360, 2009.

\bibitem[Germain et~al.(2013)Germain, Habrard, Laviolette, and Morvant]{pbda}
Germain, P., Habrard, A., Laviolette, F., and Morvant, E.
\newblock A {PAC-B}ayesian approach for domain adaptation with specialization
  to linear classifiers.
\newblock In \emph{ICML}, pp.\  738--746, 2013.

\bibitem[Germain et~al.(2015)Germain, Lacasse, Laviolette, Marchand, and
  Roy]{graal-neverending}
Germain, P., Lacasse, A., Laviolette, F., Marchand, ., and Roy, J.-F.
\newblock Risk bounds for the majority vote: From a {PAC-Bayesian} analysis to
  a learning algorithm.
\newblock \emph{JMLR}, 16:\penalty0 787--860, 2015.

\bibitem[Herbrich \& Graepel(2000)Herbrich and Graepel]{herbrich-00}
Herbrich, R. and Graepel, T.
\newblock A {PAC-B}ayesian margin bound for linear classifiers: Why svms work.
\newblock In \emph{NIPS}, pp.\  224--230, 2000.

\bibitem[Huang et~al.(2006)Huang, Smola, Gretton, Borgwardt, and
  Sch{\"o}lkopf]{HuangSGBS-nips06}
Huang, J., Smola, A., Gretton, A., Borgwardt, K., and Sch{\"o}lkopf, B.
\newblock Correcting sample selection bias by unlabeled data.
\newblock In \emph{NIPS}, pp.\  601--608, 2006.

\bibitem[Jiang(2008)]{jiang2008literature}
Jiang, J.
\newblock A literature survey on domain adaptation of statistical classifiers,
  2008.

\bibitem[Jones et~al.(2001--)Jones, Oliphant, Peterson, et~al.]{scipy}
Jones, E., Oliphant, T., Peterson, P., et~al.
\newblock {SciPy}: Open source scientific tools for {Python}, 2001--.
\newblock URL \url{http://www.scipy.org/}.

\bibitem[Lacasse et~al.(2006)Lacasse, Laviolette, Marchand, Germain, and
  Usunier]{graal-nips06-mv}
Lacasse, A., Laviolette, F., Marchand, M., Germain, P., and Usunier, N.
\newblock {PAC-B}ayes bounds for the risk of the majority vote and the variance
  of the {G}ibbs classifier.
\newblock In \emph{NIPS}, pp.\  769--776, 2006.

\bibitem[Langford \& Shawe-Taylor(2002)Langford and Shawe-Taylor]{Langford02}
Langford, J. and Shawe-Taylor, J.
\newblock {PAC-Bayes} \& margins.
\newblock In \emph{NIPS}, pp.\  439--446, 2002.

\bibitem[Li \& Bilmes(2007)Li and Bilmes]{LiB07}
Li, X. and Bilmes, J.
\newblock A {B}ayesian divergence prior for classiffier adaptation.
\newblock In \emph{{AISTATS}}, pp.\  275--282, 2007.

\bibitem[Liu et~al.(2008)Liu, Mackey, Roos, and Pereira]{liu2008evigan}
Liu, Q., Mackey, A.~J., Roos, D.~S., and Pereira, F.
\newblock Evigan: a hidden variable model for integrating gene evidence for
  eukaryotic gene prediction.
\newblock \emph{Bioinformatics}, 24\penalty0 (5):\penalty0 597--605, 2008.

\bibitem[Mansour et~al.(2009)Mansour, Mohri, and Rostamizadeh]{Mohri}
Mansour, Y., Mohri, M., and Rostamizadeh, A.
\newblock Domain adaptation: Learning bounds and algorithms.
\newblock In \emph{COLT}, 2009.

\bibitem[Margolis(2011)]{margolis2011literature}
Margolis, A.
\newblock A literature review of domain adaptation with unlabeled data, 2011.

\bibitem[Maurer(2004)]{maurer-04}
Maurer, A.
\newblock A note on the {PAC}-{B}ayesian theorem.
\newblock \emph{CoRR}, cs.LG/0411099, 2004.

\bibitem[McAllester(2013)]{mcallester-13}
McAllester, D.
\newblock A {PAC-B}ayesian tutorial with a dropout bound.
\newblock \emph{CoRR}, abs/1307.2118, 2013.

\bibitem[McAllester(1999)]{Mcallester99a}
McAllester, D.~A.
\newblock Some {PAC-Bayesian} theorems.
\newblock \emph{Mach. Learn.}, 37:\penalty0 355--363, 1999.

\bibitem[McAllester \& Keshet(2011)McAllester and Keshet]{mcallester-keshet-11}
McAllester, D.~A. and Keshet, J.
\newblock Generalization bounds and consistency for latent structural probit
  and ramp loss.
\newblock In \emph{NIPS}, pp.\  2205--2212, 2011.

\bibitem[Morvant et~al.(2012)Morvant, Habrard, and Ayache]{morvant12}
Morvant, E., Habrard, A., and Ayache, S.
\newblock {Parsimonious Unsupervised and Semi-Supervised Domain Adaptation with
  Good Similarity Functions}.
\newblock \emph{{KAIS}}, 33\penalty0 (2):\penalty0 309--349, 2012.

\bibitem[Pan \& Yang(2010)Pan and Yang]{pan2010survey}
Pan, S.~J. and Yang, Q.
\newblock A survey on transfer learning.
\newblock \emph{T. Knowl. Data En.}, 22\penalty0 (10):\penalty0 1345--1359,
  2010.

\bibitem[Parrado-Hern{\'a}ndez et~al.(2012)Parrado-Hern{\'a}ndez, Ambroladze,
  Shawe-Taylor, and Sun]{Parrado-Hernandez12}
Parrado-Hern{\'a}ndez, E., Ambroladze, A., Shawe-Taylor, J., and Sun, S.
\newblock {PAC-Bayes} bounds with data dependent priors.
\newblock \emph{JMLR}, 13:\penalty0 3507--3531, 2012.

\bibitem[Patel et~al.(2015)Patel, Gopalan, Li, and Chellappa]{patel2015visual}
Patel, V.~M., Gopalan, R., Li, R., and Chellappa, R.
\newblock Visual domain adaptation: A survey of recent advances.
\newblock \emph{IEEE Signal Proc. Mag.}, 32\penalty0 (3):\penalty0 53--69,
  2015.

\bibitem[Pedregosa et~al.(2011)Pedregosa, Varoquaux, Gramfort, Michel, Thirion,
  Grisel, Blondel, Prettenhofer, Weiss, Dubourg, Vanderplas, Passos,
  Cournapeau, Brucher, Perrot, and Duchesnay]{scikit-learn}
Pedregosa, F., Varoquaux, G., Gramfort, A., Michel, V., Thirion, B., Grisel,
  O., Blondel, M., Prettenhofer, P., Weiss, R., Dubourg, V., Vanderplas, J.,
  Passos, A., Cournapeau, D., Brucher, M., Perrot, M., and Duchesnay, E.
\newblock Scikit-learn: Machine learning in {P}ython.
\newblock \emph{JMLR}, 12:\penalty0 2825--2830, 2011.

\bibitem[Quionero-Candela et~al.(2009)Quionero-Candela, Sugiyama, Schwaighofer,
  and Lawrence]{quionero2009dataset}
Quionero-Candela, J., Sugiyama, M., Schwaighofer, A., and Lawrence, N.~D.
\newblock \emph{Dataset shift in machine learning}.
\newblock The MIT Press, 2009.

\bibitem[Seldin \& Tishby(2010)Seldin and Tishby]{seldin-10}
Seldin, Y. and Tishby, N.
\newblock {PAC-B}ayesian analysis of co-clustering and beyond.
\newblock \emph{JMLR}, 11:\penalty0 3595--3646, 2010.

\bibitem[Shimodaira(2000)]{covariateshift}
Shimodaira, H.
\newblock Improving predictive inference under covariate shift by weighting the
  log-likelihood function.
\newblock \emph{J. Statist. Plann. Inference}, 90\penalty0 (2):\penalty0
  227--244, 2000.

\bibitem[Sugiyama et~al.(2007)Sugiyama, Nakajima, Kashima, von B{\"u}nau, and
  Kawanabe]{Sugiyama-NIPS07}
Sugiyama, M., Nakajima, S., Kashima, H., von B{\"u}nau, P., and Kawanabe, M.
\newblock Direct importance estimation with model selection and its application
  to covariate shift adaptation.
\newblock In \emph{NIPS}, 2007.

\bibitem[Urner et~al.(2011)Urner, Shalev{-}Shwartz, and Ben{-}David]{urner-11}
Urner, R., Shalev{-}Shwartz, S., and Ben{-}David, S.
\newblock Access to unlabeled data can speed up prediction time.
\newblock In \emph{ICML}, pp.\  641--648, 2011.

\bibitem[Zhang et~al.(2012)Zhang, Zhang, and Ye]{zhang2012generalization}
Zhang, C., Zhang, L., and Ye, J.
\newblock Generalization bounds for domain adaptation.
\newblock In \emph{NIPS}, pp.\  3320--3328, 2012.

\bibitem[Zhong et~al.(2010)Zhong, Fan, Yang, Verscheure, and Ren]{Zhong-ECML10}
Zhong, E., Fan, W., Yang, Q., Verscheure, O., and Ren, J.
\newblock Cross validation framework to choose amongst models and datasets for
  transfer learning.
\newblock In \emph{ECML-PKDD}, pp.\  547--562, 2010.

\end{thebibliography}
	\bibliographystyle{icml2016}

\newpage 

\appendix
\section*{Supplementary Material}

\section{Proof of Theorem~4}
\newcommand{\LD}{\mathcal{L}_{\P}}
\newcommand{\LS}{\mathcal{L}_{S}}
\newcommand{\LSp}{\mathcal{L}_{S'}}

\begin{proof}
We use the following shorthand notation:
$$\LD(h) = \esp{\exbf\sim\P}  \ell(h, \xb,y)$$
\mbox{ and }
$$\LS(h) = \frac1m\sum_{\exbf\in S}  \ell(h, \xb,y)\,.
$$
Consider any convex function $\Delta:[0,1]{\times}[0,1]\to\R$.
Applying consecutively Jensen's Inequality and the \emph{change of measure inequality} (see \citet[Lemma 4]{seldin-10} and \citet[Equation (20)]{mcallester-13}), we obtain
\begin{align*}
\forall \posterior \mbox{ on } \Hcal\,: \quad  & m{\times}\Delta\left(
\esp{h\sim\posterior} \LS(h),  \esp{h\sim\posterior} \LD(h)
\right) \\
&\ \leq \ 
\esp{h\sim\posterior}
m {\times} \Delta\left(
\LS(h), \LD(h)
\right)\\
&\ \leq \ \KL(\posterior\|\prior)
+ \ln \Big[
X_\prior(S)
\Big]\,,
\end{align*}
with
\begin{equation*}
X_\prior(S) = 
\esp{h\sim\prior}
e^{m {\times} \Delta\left(
 \LS(h),\, \LD(h)
\right)}.
\end{equation*}
Then, Markov's Inequality gives
\begin{equation*}
\Pr_{S\sim\P^m} \left(
X_\prior (S) \leq \tfrac1\delta \esp{S'\sim\P^m} X_\prior (S')
\right)
\ \geq \ 
1{-}\delta\,, 
\end{equation*}
and 
\begin{align}
\nonumber
&\esp{S'\sim\P^m} X_\prior (S')
\ =\ 
\esp{S'\sim\P^m} \esp{h\sim\prior}
e^{m {\times} \Delta\left(
 \LSp(h),\, \LD(h)
\right)}
 \\
\nonumber
 =& \ 
\esp{h\sim\prior} \esp{S'\sim\P^m} 
e^{m {\times} \Delta\left(
 \LSp(h),\, \LD(h)
\right)}
\\
\label{eq:aaa}
\leq & \ \esp{h\sim\prior} 
\sum_{k=0}^m
\binom{k}m (\LD(h))^k (1{-}\LD(h))^{m-k}
e^{m {\times} \Delta\left(
 \frac{k}m,\, \LD(h)
\right)},
\end{align}
where the last inequality is due to \citet[Lemma 3]{maurer-04} (we have an equality when the output of $\ell$ is in $\{0,1\}$).
As shown in \citet[Corollary 2.2]{germain2009pac}, by fixing
 $$\Delta(q,p) = -c{\times} q -\ln[1{-}p\,(1{-}e^{-c})]\,,$$
 Line~\ref{eq:aaa} becomes equal to $1$, and then $\esp{S'\sim\P^m} X_\prior (S')\leq 1$.
Hence,
\begin{align*}
\Pr_{S\sim\P^m} \Bigg(
\forall \posterior \mbox{ on } \Hcal\,:  \,
-c \esp{h\sim\posterior} \LS(h) -\ln[1{-}\esp{h\sim\posterior} \LD(h)\,(1{-}e^{-c})]&\\
 \leq  
 \frac{\KL(\posterior\|\prior)
 + \ln  \tfrac1\delta}{m} 
\Bigg)
\ \geq \ 
1{-}\delta&\,.
\end{align*}
By reorganizing the terms, we have, with probability $1{-}\delta$ over the choice of $S\in \P^m$, 
\begin{align*}
&\forall \posterior \mbox{ on } \Hcal\,: \,
 \esp{h\sim\posterior} \LD(h)\\
 &\leq  
\frac1{1{-}e^{-c}}\left[1-
\exp\left(-c \esp{h\sim\posterior} \LS(h) -
 \frac{\KL(\posterior\|\prior)
 + \ln  \tfrac1\delta}{m} \right)\right].
\end{align*}
The final result is obtained by using the inequality $1{-}\exp(-z) \leq z$. 
\end{proof}

\section{Using \algo with a kernel function}

\newcommand{\m}{\textsc{m}}
Let
$S\!=\!\{(\xbf_i,y_i)\}_{i=1}^{\ms}$,\,\, $T\!=\!\{\xbf_i'\}_{i=1}^{\mt}$
\,and\, $\m=\ms+\mt$. 
We will denote
$$\xb_\# \, =\, 
\begin{cases}
\xb_i & \mbox{if } \# \leq \ms  \quad\mbox{ (source examples)} \\
\xbf'_{\#-\ms} &\mbox{otherwise.} \quad\ \mbox{ (target examples)}
\end{cases} 
$$

The kernel trick allows us to work with
dual weight vector $\alphab\in \R^{\m}$ that is a linear classifier in an augmented space.  Given a kernel $k:\R^d \times \R^d\rightarrow\R$, we have
\begin{equation*}
  h_\wb(\cdot) \ = \ \sgn\left[\sum_{i=1}^\m \alpha_i k(\xb_i, \cdot) \right].
\end{equation*}
Let us denote $K$ the kernel matrix of size $\m\times \m$ such as
$K_{i,j} = k(\xb_i, \xb_j)\,.$
In that case, the objective function
---Equation~(13) of the main paper---can be rewritten in term of the vector 
$$\alphab = (\alpha_1,\alpha_2, \ldots\alpha_{\m})$$
 as
\begin{align*} 
C \times  \sum_{i=\ms}^\m 
\Phi \left(  \frac{\sum_{j=1}^{\m} \alpha_j K_{i,j}}{ \sqrt{K_{i,i}} }  \right)  
\Phi \left(  -\frac{\sum_{j=1}^{\m} \alpha_j K_{i,j}}{ \sqrt{K_{i,i}} }  \right) 
\\
+B \times\sum_{i=1}^\ms 
 \left[
 \Phi \left(  y_i \frac{\sum_{j=1}^{\m} \alpha_j K_{i,j}}{ \sqrt{K_{i,i}} }  \right)
 \right]^2 
+\sum_{i=1}^{\m} \sum_{j=1}^{\m} \alpha_i \alpha_j K_{i,j} \,.
\end{align*}

For our experiments, we minimize this objective function using a \emph{Broyden-Fletcher-Goldfarb-Shanno method (BFGS)} implemented in the \emph{scipy} python library \cite{scipy}.

We initialize the optimization procedure at  $\alpha_i \!=\! \frac1\m$ for all $i\in\{1,\ldots,\m\}$.

\section{Experimental Protocol}

For obtaining the {\algo}$^{RCV}$ results of Table~1, the reverse validation procedure searches on a $20\times 20$ parameter grid for a $C$ between $0.01$ and $10^6$ and a parameter $B$ between $1.0$ and $10^8$, both on a logarithm scale. The results of the other algorithms are reported from~\citet{pbda}.

\end{document}